\documentclass{article}
\usepackage{iclr2027_conference,times}

\usepackage{amsmath,amssymb,amsthm}
\usepackage{mathtools}
\usepackage{bm}
\usepackage[colorlinks=true,linkcolor=blue,citecolor=blue,urlcolor=blue]{hyperref}
\usepackage{enumitem}
\usepackage{booktabs}
\usepackage{graphicx}
\usepackage{microtype}
\usepackage{xcolor}
\usepackage{array}
\usepackage{listings}
\usepackage{pdflscape}
\usepackage{float}
\usepackage{pgfplots}
\pgfplotsset{compat=1.18}
\usepgfplotslibrary{groupplots}
\usepackage{tikz}
\usetikzlibrary{arrows.meta}
\usepackage{caption}

\definecolor{codeblue}{RGB}{26,65,138}
\definecolor{codegreen}{RGB}{34,112,72}
\definecolor{codered}{RGB}{163,45,45}
\definecolor{codepurple}{RGB}{111,55,145}
\definecolor{codegray}{RGB}{95,99,104}
\lstdefinestyle{paperpython}{
  language=Python,
  basicstyle=\ttfamily\fontsize{5}{5.5}\selectfont,
  keywordstyle=\bfseries\color{codeblue},
  commentstyle=\color{codegreen},
  stringstyle=\color{codered},
  numberstyle=\color{codegray},
  showstringspaces=false,
  keepspaces=true,
  columns=fullflexible,
  breaklines=true,
  breakatwhitespace=false,
  tabsize=2,
  aboveskip=0pt,
  belowskip=0pt,
}
\lstdefinestyle{paperast}{
  basicstyle=\ttfamily\fontsize{5}{5.5}\selectfont,
  keywordstyle=\bfseries\color{codepurple},
  stringstyle=\color{codered},
  morekeywords={Module,Assign,Attribute,BinOp,Call,Compare,Constant,Expr,For,
    FunctionDef,If,Import,Name,Return,Subscript,Tuple,Yield,alias,arg,arguments,
    Add,Div,Eq,Mod,Sub},
  showstringspaces=false,
  keepspaces=true,
  columns=fullflexible,
  breaklines=true,
  breakatwhitespace=false,
  aboveskip=0pt,
  belowskip=0pt,
}

\theoremstyle{plain}
\newtheorem{theorem}{Theorem}
\newtheorem{proposition}{Proposition}

\newtheorem{lemma}{Lemma}

\theoremstyle{definition}

\newtheorem{assumption}{Assumption}
\newtheorem{example}{Example}

\theoremstyle{remark}
\newtheorem{remark}{Remark}

\makeatletter
\renewenvironment{proof}[1][\proofname]{
  \par
  \pushQED{\qed}
  \normalfont
  \topsep6\p@\@plus6\p@\relax
  \trivlist
  \item[\hskip\labelsep\bfseries #1\@addpunct{.}]
  \ignorespaces
}{
  \popQED
  \endtrivlist
  \@endpefalse
}
\makeatother

\DeclareMathOperator*{\argmax}{arg\,max}
\DeclareMathOperator{\Tr}{Tr}
\DeclareMathOperator{\Cov}{Cov}
\DeclareMathOperator{\Var}{Var}
\DeclareMathOperator{\TV}{TV}

\DeclareMathOperator{\softmax}{softmax}
\newcommand{\E}{\mathbb{E}}

\newcommand{\hth}{\hat\theta}

\title{Fine-Tuning Fixes Mode Collapse and Over-Dispersion in LLMs}

\author{%
  Kirill Skobelev \\
  Northwestern University \\
  {\small\texttt{kirill@u.northwestern.edu}} \\
  \And
  Eric Fithian \\
  University of Chicago \\
  {\small\texttt{efithian@uchicago.edu}} \\
  \And
  X.Y. Han \\
  University of Chicago \\
  {\small\texttt{xyhan@uchicago.edu}} \\
}

\iclrfinalcopy
\hypersetup{pdfauthor={Kirill Skobelev, Eric Fithian, X.Y. Han}}

\AddToHook{build/column/before}{%
  \ifnum\value{page}=\getpagerefnumber{fig:exp4fullmatrix}\relax
    \raggedbottom
  \fi
}
\graphicspath{{./}{figs/}}

\begin{document}
\maketitle
\fancyhead{}
\renewcommand{\headrulewidth}{0pt}

\begin{abstract}
Recent work by \citet{doshi2024}, \citet{bisbee2024}, and \citet{xie2026} raises concerns that outputs from large language models (LLMs) tend to be under-diverse: they repeat or resemble one another more often than responses from the population they are meant to represent, a phenomenon known as mode collapse. In this work, we show that whether mode-collapse---or its opposite---occurs depends on the specific model and dataset used. Further, with sufficient supervised fine-tuning (SFT) data, LLM output diversity converges toward that of the target distribution from which fine-tuning data are sampled. To quantify this comparison, we measure the probability that two responses sampled independently conditional on the same fixed prompt coincide (collide). In one set of experiments, we use exact token sequences; in the other, we use the generalised version of that---the expected similarity between responses under a kernel. We define miscalibration as a nonzero model-minus-target collision gap. We derive a bias--variance decomposition of the expected gap between the model's and target's collision probabilities, showing that SFT is not inherently biased toward mode collapse or its opposite: finite-sample SFT can leave a model either under- or over-dispersed, depending on the model and dataset. Finally, we show that the absolute gap is bounded by the square root of the Kullback--Leibler (KL) divergence from the target distribution to the model. Consequently, a model sufficiently close to optimal under population cross-entropy cannot exhibit arbitrarily miscalibrated diversity.
We test the decomposition and the bound in three experiments: (1) we fit 100 small transformers at each of 100 log-spaced sample sizes on each of two synthetic order-16 languages; (2) we fine-tune four LLMs using low-rank adaptation on responses from the General Social Survey, American National Election Studies, and World Values Survey; and (3) we repeat Experiment~(2) on CodeNet, a dataset of human solutions to coding tasks, measuring program similarity with normalized Zhang--Shasha edit distance between canonical abstract syntax trees. We find substantial heterogeneity in over- and under-diversity across models and datasets. More target data moves model diversity toward the human (or synthetic target) level in all experiments, consistent with our theoretical predictions.
These results show that diversity miscalibration can arise from finite-sample error and shrink as SFT better approximates the target distribution. Accordingly, as the sample size of target-distribution data increases, model diversity moves toward the target level.
\end{abstract}

\section{Introduction}
\label{sec:intro}

Large language models (LLMs) are increasingly used as simulated survey respondents \citep{argyle2023}, for creative writing \citep{chakrabarty2024}, as brainstorming partners \citep{girotra2023}, and in other domains where faithfulness of the LLM's outputs to the diversity of the target population is important. Empirical studies show that LLMs often exhibit a narrow range of outputs, a phenomenon known as mode collapse: the model's outputs repeat, or resemble one another, more often than samples from the population they are meant to represent. For instance, stories written with artificial intelligence (AI) assistance tend to be similar to one another \citep{doshi2024}; AI-assisted users may generate ideas that are less semantically distinct \citep{anderson2024}; and synthetic survey respondents can reproduce aggregate patterns while suppressing individual and group-level variation \citep{bisbee2024,wang2025,xie2026}.

These findings are often interpreted as evidence that low output diversity is an inherent limitation of LLMs \citep{xie2025}. However, when a pre-trained or instruction-tuned model is fine-tuned on a new dataset, its starting distribution is being compared with a target distribution it has not yet learned. It is unclear whether this under-diversity persists as a model learns the target distribution more accurately. If mode collapse is inherent, increasing the amount of target-distribution fine-tuning data need not eliminate it; if, instead, mode collapse and other distribution miscalibrations simply reflect finite-sample error, model diversity should converge toward the target level with sufficient fine-tuning. To settle this distinction, we begin with formalizing it theoretically.


We quantify LLM output diversity by collision probability---the probability that two responses sampled independently conditional on the same fixed prompt coincide---or, more generally, by their expected similarity under a kernel (Section~\ref{sec:model}). For next-token comparisons, we instead fix the full context: the prompt and token prefix. Model and target are compared under the same conditioning and similarity measure. Their collision gap is the model's collision probability minus the target's; where zero indicates calibrated diversity, a positive gap indicates mode collapse, and a negative gap indicates over-dispersion.

This formalization allows us to make three claims (Figure~\ref{fig:overview-improved}). (1)~SFT does not inherently produce under- or over-dispersion; the direction depends on the model and dataset. The difference in collision probabilities, averaged over independent fits, decomposes into nonnegative variance and squared-bias terms, which promote concentration, and a sign-indefinite target--bias alignment term that can offset them (Section~\ref{sec:neural}).
(2)~Magnitude: the absolute collision gap is bounded by the square root of the Kullback--Leibler (KL) divergence from the target distribution to the model (Theorem~\ref{thm:kernelstability}). This bound implies (3)~Calibration in the limit: as this KL divergence approaches zero, the collision gap necessarily approaches zero. In practice, this leads to the intuitive conclusion that if more target data and loss minimization bring population cross-entropy toward the value achieved by the target distribution itself, model diversity must approach the target level.

\begin{figure}[t]
\centering
\includegraphics[width=\textwidth]{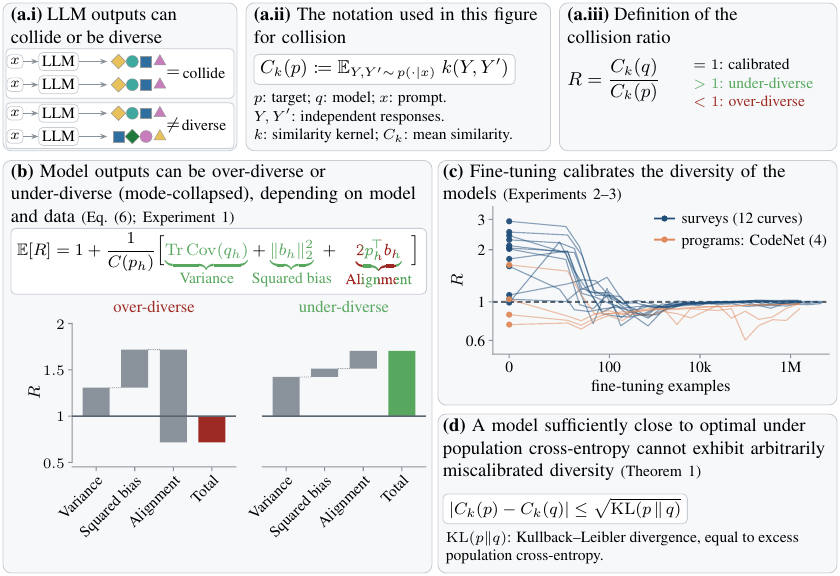}
\caption{\textbf{Large Langauge Models (LLMs) can produce diverse or colliding outputs given the same prompt. The extent to LLMs do so can be measured. Higher average output similarity than the target's is referred to as mode-collapse. However, contrary to wide-spread belief, mode-collapse is not guaranteed, and depending on model and dataset, models can instead be over-diverse (the opposite of mode-collapse). It can be shown that fine-tuning calibrates the models, that is, they become not too under- and over-diverse.}
\textbf{(a.i)} Independent outputs from the same LLM and prompt can collide or differ.
\textbf{(a.ii)} The notation used in this figure for collision.
\textbf{(a.iii)} The definition of the collision ratio $R$.
\textbf{(b)} Shows the decomposition of Eq.~\eqref{eq:ifbox} using data from Experiment~1 (\S\ref{sec:exp-synth}). The two plots correspond to two different model and data set ups. The plots demonstrate that Alignment term of Eq.~\eqref{eq:ifbox} can make a model either under- or over-diverse. Here $b_h=\E[q_h]-p_h$ is the bias at context $h$, with $\E$ over independent fits. The plotted curves average contexts and start at $R=1$, with each contribution divided by $C(p_h)$. \textbf{(c)} Plots median collision ratio $R$ for early-stopped survey and final CodeNet adapters against fine-tuning examples from Experiments~2--3 (\S\S\ref{sec:exp-survey}--\ref{sec:exp-code}). The experiments show that additional fine-tuning leads to diversity calibration.
\textbf{(d)} Theorem~\ref{thm:kernelstability} bounds the absolute model--target collision gap by the square root of their Kullback-Leibler divergence. Consequently, a model sufficiently close to optimal under population cross-entropy cannot exhibit arbitrarily miscalibrated diversity.}
\label{fig:overview-improved}
\end{figure}

To support our claims empirically, we set up three experiments (Section~\ref{sec:empirical}). Experiment~1 tests claim~(1) and checks the bound of claim~(2): in two synthetic order-$16$ language settings with exactly computable targets, models with the generative pre-trained transformer (GPT) architecture fitted across $100$ log-spaced sample sizes produce collision distortions of both signs (Section~\ref{sec:exp-synth}, Figure~\ref{fig:exp2decomp}), and the bound holds on every evaluated context (Appendix~\ref{sec:bound-check}, Figure~\ref{fig:exp2bound}). Experiments~2 and~3 test claim~(3) from opposite starting points. We use low-rank adaptation (LoRA) to fine-tune four instruction-tuned LLMs (\texttt{gemma-2-2b-it} \citep{gemma_2}, \texttt{gemma-3-4b-it} \citep{gemma_3}, \texttt{Qwen3.5-2B} and \texttt{Qwen3.5-4B} \citep{qwen}) on General Social Survey (GSS) \citep{davern2025}, American National Election Studies (ANES) \citep{anes2021}, and World Values Survey (WVS) \citep{haerpfer2020}. We find that three of the four base models are $1.6$ to $2.9$ times under-dispersed relative to the human populations, and SFT on human answers restores population-level heterogeneity for every model on every survey (Section~\ref{sec:exp-survey}, Figure~\ref{fig:exp4fullmatrix}). Next, we apply the same method to CodeNet \citep{puri2021codenet}, a dataset of human solutions to coding tasks, where we measure distance between solutions using the normalized Zhang--Shasha similarity kernel \citep{zhangshasha1989}. On this same dataset, some base models are under-diverse and others over-diverse ($R=0.74$--$1.64$): mode collapse depends on the model. Supervised fine-tuning narrows the gap from both directions, as predicted by theory.

\section{Related work}
\label{sec:related}

\noindent {\bf Evidence of output mode collapse.} Generative AI can enhance individual story ratings and increase similarity between AI-assisted stories \citep{doshi2024}, while co-writing with InstructGPT reduces content diversity in essays \citep{padmakumar2023}. ChatGPT users generate more numerous and detailed ideas than users of another creativity-support tool, but these ideas exhibit less semantic distinctiveness across users \citep{anderson2024}. \citet{sourati2025} argue that LLMs reinforce dominant styles of language and reasoning, and that widespread reliance on the same few models amplifies this convergence. Our results show that SFT is not inherently biased toward mode collapse or its opposite: the direction of miscalibration depends on the model and dataset, and fine-tuning can bring diversity toward the target from either direction. LLMs are also employed as proxies for human participants in ``silicon sampling'' studies \citep{argyle2023}, but recent work shows that synthetic surveys can match aggregate statistics while failing to capture individual variation and regression structures \citep{bisbee2024}, and can misrepresent and flatten identity groups \citep{wang2025}. A  benchmark of 15 LLMs across seven surveys documents the same compression of heterogeneity \citep{xie2026}. Recursive fitting on model-generated data also removes data points from the tails of the original distribution \citep{shumailov2024,seddik2024}, illustrating the ease with which generative resampling can erase rare modes.

\noindent {\bf Preference optimization and diversity.} Preference optimization has been shown to shrink output diversity. Reinforcement learning from human feedback (RLHF) with proximal policy optimization (PPO) reduces both within- and across-prompt diversity for summarization, as assessed by multiple lexical and semantic metrics, though the same study finds no meaningful diversity differences for instruction following \citep{kirk2024}. The stage at which this contraction occurs varies across model lineages, with some models showing the largest reduction during SFT and others during direct preference optimization (DPO) \citep{karouzos2026}. These effects are linked to preference data and annotator heterogeneity. Annotators tend to favor responses to which base models assign higher likelihood even when responses are matched on correctness; consequently, a perfectly estimated preference reward can preserve typicality bias toward conventional responses \citep{verbalized2025}. Finally, aggregating annotators with heterogeneous criteria implicitly implements the Borda-count rule rather than recovering an observer-independent utility \citep{siththaranjan2024}, and a single scalar reward may not represent multimodal population preferences \citep{chakraborty2024}. For a reference policy $\pi_{\mathrm{ref}}$, population reward $r^\star$, and KL penalty strength $\beta$, the optimal policy has the form $\pi^\star(y\mid x)\propto\pi_{\mathrm{ref}}(y\mid x)\exp\!\big(r^\star(x,y)/\beta\big)$, and DPO directly fits a policy within this implicit-reward class \citep{rafailov2023}. Finally, common reward structures and regularization regimes can produce unimodal or concentrated optima under both forward and reverse KL divergence, indicating that the KL penalty alone does not preserve the reference distribution's diversity \citep{gxchen2025}. These population-level mechanisms persist as preference data size increases. We focus instead on the finite-sample distortion around the population objective, adding a finite-sample component to the objective-induced gap $C(\pi^\star)-C(p)$ from existing literature (Appendix~\ref{sec:rl}). Because form-sensitive and content-sensitive diversity metrics capture different aspects of generator behavior \citep{tevet2021}, our conclusions are metric-specific.

\noindent {\bf Relation to prior work.} \citet{xie2026} find compressed heterogeneity in survey microdata generated by 15 LLMs; fine-tuning one model on 1{,}000 records improves aggregate realism on held-out surveys. Our analysis shows that the collision gap can have either sign and must vanish as excess population cross-entropy approaches zero (\S\ref{sec:neural}, Theorem~\ref{thm:kernelstability}). Whereas the entropy-regularized game-theoretic method GEM \citep{li2025}, Selectively Encouraging Diversity in Supervised Fine-Tuning (SED-SFT) \citep{chen2026}, and tempered focal (TOFU) loss \citep{klypa2026} modify the SFT objective to protect diversity, our experiments show that plain SFT moves diversity toward the target across four models, three surveys, and CodeNet, from both directions. \citet{banayeeanzade2026} attribute sequence-level diversity loss to miscalibrated token probabilities; our decomposition and bound hold for any fitting procedure.

\section{Theory}
\label{sec:model}

Let $V$ be the token vocabulary, and $V^*$ the set of finite token sequences. Let prompts $x$ be drawn from a distribution $\nu$, and let $p(y\mid x)$ be the target distribution over responses $y\in V^*$. The model outputs token $a_t$ given context $h_t := (x, y_{<t})$, where $y_{<t}$ is the response prefix before step $t$. The true conditional distribution is $p_h(a) := p(a \mid h)$. For an LLM with parameters $\theta$, $f_\theta(h)$ is the vector of pre-softmax scores (logits) predicted by the model given context $h$. Applying softmax gives $q_\theta(a\mid h):=\softmax(f_\theta(h))_a$, the model's probability of token $a$ given context $h$. SFT fits $\theta$ by minimizing empirical cross-entropy, the average of $-\log q_\theta(a\mid h)$ over observed context--token pairs $(h,a)$ in the fine-tuning data. A model fit produces parameters $\hth$ and model $q_{\hth}$.

We now define collision for an arbitrary distribution $\pi$ over tokens. Collision is the probability that two tokens drawn independently from $\pi$ are equal:
\begin{equation}
\textstyle c(\pi) := \sum_{a\in V} \pi(a)^2.
\label{eq:local-collision}
\end{equation}

The inverse Simpson index is the number of equally likely choices that would give the same collision probability. For $K$ equally likely tokens, it equals $K$. For target and model distributions $p$ and $q$, define
\begin{equation}
\textstyle N_2(p) := \frac{1}{C(p)}, \qquad N_2(q) := \frac{1}{C(q)},
\end{equation}
where $C$ denotes collision. At a fixed context $h$, $C(p)=c(p_h)$ and $C(q)=c(q_\theta(\cdot\mid h))$.

We compare model and target diversity through the collision ratio
\begin{equation}
\textstyle R := \frac{C(q)}{C(p)} = \frac{N_2(p)}{N_2(q)},
\label{eq:relative-diversity}
\end{equation}
where $C(p)>0$: $R=1$ indicates calibrated diversity, $R>1$ mode collapse, and $R<1$ over-dispersion.

For a fixed prompt $x$, draw complete responses $Y$ and $Y'$ independently from $p(\cdot\mid x)$. Their joint law is the product measure $p( . | x) \otimes p(. | x)$; independence is conditional on $x$, and averaging over a shared random prompt need not preserve it. Under a similarity kernel $k(y,y') \in [0,1]$, kernel collision is
\begin{equation}
C_{p,k}(x) = \E_{Y,Y'\sim p(\cdot\mid x)}[k(Y,Y')].
\label{eq:kernel-collision}
\end{equation}

The kernel function $k(y, y')$ must satisfy $k(y, y') \in [0, 1]$, $k(y, y) = 1$, and $C_{p, k}(x) > 0$. A kernel representing exact collision is defined as $k(y, y') = \mathbf{1}\{y = y'\}$. Finally, task-specific kernels may measure embedding cosine similarity, shared cluster membership, normalized Zhang--Shasha \citep{zhangshasha1989} similarity between code abstract syntax trees (ASTs), or similarity of survey responses, with all values shifted to the range $[0, 1]$.

\subsection{Sign and magnitude of the difference in collision probabilities}
\label{sec:neural}

\noindent {\bf The collision decomposition.} For a fixed context $h$, consider fitted conditional probability $q_{\hth,h}=q_{\hth}(\cdot\mid h)$ and true conditional probability $p_h$. Applying Eq.~\eqref{eq:kernel-collision} to next-token outputs at context $h$, with the exact-match kernel $k(a,b)=\mathbf{1}\{a=b\}$, gives $C_h(q):=\sum_{a\in V}q(a\mid h)^2=c(q(\cdot\mid h))$. Thus $C_h(q_{\hth})=\|q_{\hth,h}\|_2^2$ and $C_h(p)=\|p_h\|_2^2$. We call the difference $\E[C_h(q_{\hth})] - C_h(p)$ the expected collision gap: it is positive when the model collides more often than the target (mode collapse) and negative when the model is more diverse. For an arbitrary random vector $X$, we use the standard variance decomposition identity: $\E\|X\|_2^2 = \|\E X\|_2^2 + \Tr\Cov(X)$. Here $\Tr\Cov(X)$ is the sum of the component variances. Taking $X=q_{\hth,h}$, $\E[X]$ is the mean fitted distribution across independent fits. Next, define its bias relative to the target as
\begin{equation}
\label{eq:bias}
b_h := \E[q_{\hth,h}] - p_h.
\end{equation}
Thus $\E[q_{\hth,h}]=p_h+b_h$. Substituting this into the variance decomposition identity, expanding $\|p_h+b_h\|_2^2$, and dividing by $C_h(p)$ gives
\begin{equation}
\label{eq:ifbox}
\boxed{\textstyle\;
\E[R] = 1 + \frac{1}{C_h(p)}\Big[
\underbrace{\Tr\Cov(q_{\hth,h})}_{\text{variance}\,\ge 0}
\;+\;\underbrace{\|b_h\|_2^2}_{\text{squared bias}\,\ge 0}
\;+\;\underbrace{2\,p_h^\top b_h}_{\text{target--bias alignment}}
\Big].\;}
\end{equation}
The covariance trace and squared bias are nonnegative, and thus contribute to increasing the expected collision ratio. The target--bias alignment term $2p_h^\top b_h$ (referred to as the ``cross term'' in what follows) can be either positive or negative and may dominate these terms, so finite samples alone do not imply mode collapse. Thus the collision ratio $R$ can be above $1$ (mode collapse) or below $1$ (over-dispersion). If $b_h=0$, the expected gap is $\Tr\Cov(q_{\hth,h})\ge0$: the model is calibrated or under-dispersed in expectation, with under-dispersion whenever this variance is positive.

\noindent {\bf The sign of the cross term.} Let $q_{0, h}$ denote the starting distribution at context $h$. For a pre-trained LLM, $q_{0,h}$ is its output distribution given context $h$ before any fine-tuning. At this starting point, using the definitions of bias~\eqref{eq:bias} and collision~\eqref{eq:local-collision} we get
\begin{equation}
2\, p_h^\top b_h =
2\,p_h^\top (q_{0, h} - p_h) \;=\; 2\big(p_h^\top q_{0, h} - C(p_h)\big).
\label{eq:crosslambda}
\end{equation}
The alignment term~\eqref{eq:crosslambda} is positive when the starting distribution overlaps the target's modes more than the target overlaps itself. Moreover, the alignment term can determine whether the expected collision gap is positive or negative.

Using this setup, we can prove the following result bounding the collision gap between distributions $p$ and $q$:

\begin{theorem}[Kernel-collision stability]\label{thm:kernelstability}
Consider the similarity kernel $k:V^*\times V^*\to[0,1]$, where $V^*$ is the set of finite token sequences. Write the collision metric $C_k(\pi)=\sum_{a,b}\pi(a)\pi(b)k(a,b)$, $D_k(\pi)=1/C_k(\pi)$, where $\pi$ is a probability distribution on $V^*$ and $D_k(\pi)$ is its kernel effective diversity. Write $\TV$ for total variation distance, $\otimes$ for the product measure operator, and $\mathrm{KL}(p\|q)=\E_{a\sim p}[-\log q(a)]-\E_{a\sim p}[-\log p(a)]$ for excess population cross-entropy, using natural logarithms. For all distributions $p,q$ on $V^*$,
\begin{equation}
\boxed{\;|C_k(p)-C_k(q)|\;\le\;\TV(p\otimes p,\,q\otimes q)\;\le\;\sqrt{\mathrm{KL}(p\,\|\,q)}\;.}
\label{eq:cebound}
\end{equation}
Consequently, if $\mathrm{KL}(p\,\|\,q)\le\varepsilon$ then $\max\{0,C_k(p)-\sqrt\varepsilon\}\le C_k(q)\le\min\{1,C_k(p)+\sqrt{\varepsilon}\}$, and the kernel effective diversity obeys
\begin{equation}
\textstyle D_k(q)\;\ge\;\frac{1}{C_k(p)+\sqrt{\varepsilon}}\;=\;\frac{D_k(p)}{1+D_k(p)\sqrt{\varepsilon}}\;.
\end{equation}
\end{theorem}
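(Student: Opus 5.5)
The proof chains three elementary inequalities. The first inequality of \eqref{eq:cebound} comes from the kernel being bounded in $[0,1]$. The second comes from Pinsker's inequality together with the tensorization of KL over product measures. The consequences follow by rearranging.

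\textbf{Step 1 (collision as an expectation under the product measure).} Write $C_k(\pi)=\E_{(A,B)\sim\pi\otimes\pi}[k(A,B)]$. Then
\[
C_k(p)-C_k(q)=\sum_{a,b}\big(p\otimes p-q\otimes q\big)(a,b)\,k(a,b).
\]
For two distributions $P,Q$ on a countable set and any $g$ with values in $[0,1]$, we have $|\E_P g-\E_Q g|\le \TV(P,Q)=\sup_S|P(S)-Q(S)|$. To see this, subtract the constant $1/2$ from $g$; the integral of a constant against $P-Q$ vanishes. This gives $|\E_P g-\E_Q g|\le \tfrac12\|P-Q\|_1\cdot 2\cdot\tfrac12=\tfrac12\|P-Q\|_1$. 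Applying it with $g=k$, $P=p\otimes p$, $Q=q\otimes q$ gives the first inequality. The space $V^*\times V^*$ is countable, so all sums are absolutely convergent and no measurability issues arise.

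\textbf{Step 2 (Pinsker plus tensorization).} Pinsker's inequality in natural logs gives
\[
\TV(P,Q)\le\sqrt{\tfrac12\mathrm{KL}(P\|Q)}.
\]
KL is additive over products: $\mathrm{KL}(p\otimes p\,\|\,q\otimes q)=2\,\mathrm{KL}(p\|q)$, because $\log\frac{p(a)p(b)}{q(a)q(b)}$ splits into two terms with equal expectations. Hence $\TV(p\otimes p,q\otimes q)\le\sqrt{\mathrm{KL}(p\|q)}$. If $\mathrm{KL}(p\|q)=\infty$ the bound is trivial.

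The main point needing care is the stated definition of KL as a difference of cross-entropies. That difference is ill-defined when $p$ has infinite entropy on the countable set $V^*$. I would interpret $\mathrm{KL}$ as $\sum_a p(a)\log(p(a)/q(a))$, which agrees with the stated definition whenever the entropy is finite, and invoke Pinsker for countable spaces in that form.

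\textbf{Step 3 (consequences).} Suppose $\mathrm{KL}(p\|q)\le\varepsilon$. Then $|C_k(q)-C_k(p)|\le\sqrt\varepsilon$. Since $k\in[0,1]$, we also have $0\le C_k(q)\le1$. Combining these gives the two-sided bound with the $\max$ and $\min$.

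For diversity, $C_k(q)>0$ holds by the standing assumption. Note that $C_k(q)\ge\sum_a q(a)^2>0$ anyway, since $k(a,a)=1$ and $k\ge0$. So $D_k(q)=1/C_k(q)\ge 1/(C_k(p)+\sqrt\varepsilon)$. Multiplying the numerator and denominator by $D_k(p)=1/C_k(p)$ gives $D_k(p)/(1+D_k(p)\sqrt\varepsilon)$.
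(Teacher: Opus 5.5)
Your proposal is correct and follows the paper's proof step for step: the variational bound on total variation for the $[0,1]$-valued kernel under $p\otimes p$ versus $q\otimes q$, then Pinsker with $\mathrm{KL}(p\otimes p\|q\otimes q)=2\,\mathrm{KL}(p\|q)$, then clipping to $[0,1]$ and inverting the positive collision. Two of your points go slightly beyond what the paper writes: the care about the cross-entropy form of KL on the countable space $V^*$, and the justification of $C_k(q)>0$ via $k(a,a)=1$. Do tidy the Step~1 arithmetic so it reads $\bigl|\sum_{a,b}(P-Q)(a,b)\,(g(a,b)-\tfrac12)\bigr|\le\|P-Q\|_1\sup|g-\tfrac12|\le\tfrac12\|P-Q\|_1$.
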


The proof, via total variation on product measures and Pinsker's inequality, is in Appendix~\ref{app:proofs}, together with a sharper collision-specific bound (Proposition~\ref{prop:tightbound}).

\section{Experimental results}
\label{sec:empirical}
To test the bias-variance decomposition and convergence results in the above theory, we now run three experiments. The first uses synthetic languages with known ground truth for each decomposition term, allowing direct measurement of every term. The second and third use human data---survey responses from the GSS \citep{davern2025}, WVS \citep{haerpfer2020}, and ANES \citep{anes2021}, and human solutions to programming problems from Project CodeNet \citep{puri2021codenet}. We report the collision ratio $R$ defined in Eq.~\eqref{eq:relative-diversity}; for example, $R=2$ means the model offers half as many effective choices as the data. Experiment~1 additionally decomposes the expected normalized collision gap $\E[R]-1$ into the three terms of Eq.~\eqref{eq:ifbox}.

\subsection{Experiment 1: synthetic languages with known target distribution}
\label{sec:exp-synth}

We sample data from a synthetic language, fit a series of small GPT models on that data, and compare the diversity of the fitted models against the diversity of the language itself. Unlike experiments using human-generated data, in this setting, target probabilities are known. That allows us to repeat model fits to estimate variance, squared-bias, and target--bias alignment terms in Eq.~\eqref{eq:ifbox}. As we have established in the previous section, the latter term determines whether LLMs mode-collapses or is over-diverse. In order to control this term, we initialize the GPTs using two settings: (a) pre-training from random initialization, and (b) fine-tuning a model pre-trained to favor a small set of tokens. 

In setting~(a), the language has a vocabulary of $1024$ tokens, and each next token depends only on the preceding $16$ tokens. The target probabilities are known exactly at every context, and a small GPT can represent the generating distribution exactly, so model capacity does not limit the fit. In setting~(b), the language combines fixed token frequencies with adjustments based on the preceding $16$ tokens. Appendix~\ref{app:synthetic-data} gives the language construction and sampling details. The two settings differ in the model's starting next-token distribution, model size, and target language. They are not intended as a controlled ablation of the starting distribution alone. Figure~\ref{fig:exp2decomp} shows pre-training from random initialization in panel~(a) and fine-tuning from the pre-trained model in panel~(b).

Decomposing the expected collision gap as in Section~\ref{sec:neural} requires independent fits: a single fitted model does not show the variance across runs. We therefore fit $100$ models at each of $100$ log-spaced sample sizes of $N$ sampled sequences, giving $10{,}000$ GPTs per language. Each seed draws its own dataset, initialization, and minibatch stream, ensuring that the variance term $\Tr\Cov(q_{\hth,h})$ in Eq.~\eqref{eq:ifbox} reflects total across-run fluctuation. Details of model sizes, data construction, and fitting configurations are provided in Appendix~\ref{app:synthetic-data}.

We consider two settings that sit on opposite sides of the initial-alignment condition in~\eqref{eq:crosslambda}. In setting~(a), a GPT starts from random initialization. Its initial next-token distribution $q_{0,h}$ at context $h$ has high entropy; we call this the \textit{diffuse prior}. Its probabilities depend on the context, but the network is randomly initialized, so it does not favor or suppress particular tokens based on the target distribution $p_h$. In setting~(b), we pre-train a GPT to concentrate its probability on a fixed set of common target-language tokens. We call this concentrated starting distribution the \textit{mode-aligned prior}. Appendix~\ref{app:synthetic-data} describes the construction of both priors. This mode-aligned prior is arguably not a realistic model of pre-training from scratch, however, it could showcase the behavior of an existing LLM pre-trained on data from a distribution similar but not identical to the target.


Under the diffuse prior (Figure~\ref{fig:exp2decomp}(a)), the normalized cross term $2p_h^\top b_h/C(p_h)$, averaged over contexts, is negative at every sample size, and is largest in magnitude at small $N$. The normalized variance and squared-bias terms stay positive. The sign of $\E[R]-1$ depends on $N$: at the smallest $N$ the cross term dominates and the measured $\E[R]-1$ is negative (over-dispersion); however, over most models, the two nonnegative terms dominate and $\E[R]-1$ is positive (mode-collapse).

Under the mode-aligned prior, the cross term increases the expected collision ratio at intermediate sample sizes (Figure~\ref{fig:exp2decomp}(b)). Even where positive, it accounts for at most $31\%$ of the measured $\E[R]-1$, so variance and squared bias still account for most of the expected normalized collision gap. At the smallest $N$, however, the cross term is negative and mode collapse is driven by the two nonnegative terms.

To relate these outcomes to the starting distributions, Table~\ref{tab:preft} in Appendix~\ref{app:initial-alignment} checks the initial-alignment condition in Eq.~\eqref{eq:crosslambda} using the mean initial distribution $\bar q_{0,h}:=\E[q_{0,h}]$, estimated across the $100$ seeds. The table reports averages over contexts after the indicated normalization. The normalized cross term is initially negative under the diffuse prior and positive under the mode-aligned prior, but fine-tuning does not preserve these signs at every $N$. In the next two experiments (Sections~\ref{sec:exp-survey} and~\ref{sec:exp-code}), we use human data to test whether fine-tuning moves $R$ toward $1$ (as Theorem~\ref{thm:kernelstability} predicts when excess population cross-entropy approaches zero).

\begingroup
\setlength{\intextsep}{0pt}
\begin{figure}[H]
\centering
\includegraphics[width=\textwidth]{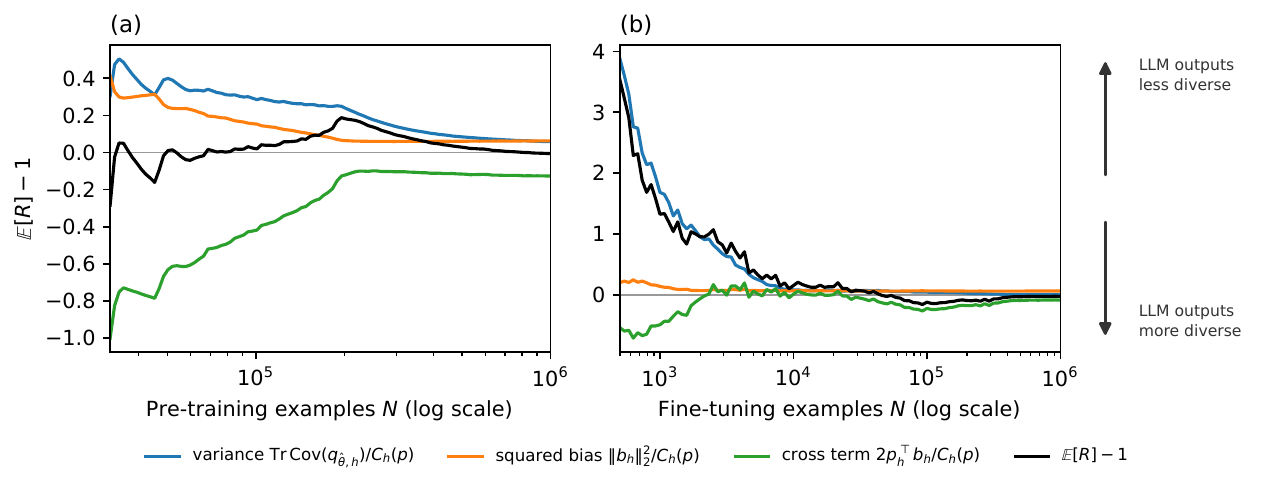}
\caption{\textbf{Decomposition of the normalized gap between model and target collision probabilities.} The two synthetic languages have known target distributions (Appendix~\ref{app:synthetic-data}), so repeated fits let us estimate the contributions of variance, squared bias, and target--bias alignment to $\E[R]-1$ using Eq.~\eqref{eq:ifbox}. Contributions are averaged over contexts and plotted against sampled sequences $N$. (a) shows pre-training from the diffuse prior ($6.135$M-parameter GPT, $N$ from $32$k to $1$M, $2000$ common contexts); (b) shows fine-tuning from the mode-aligned prior ($27.97$M-parameter GPT, $N$ from $500$ to $10^6$). The black line shows the estimated $\E[R]-1$, averaged over contexts: positive values indicate mode collapse and negative values over-dispersion. The three contributions sum to this value. The two panels carry independent vertical and horizontal scales.}
\label{fig:exp2decomp}
\end{figure}
\endgroup

\subsection{Experiment 2: sociological surveys (GSS, WVS, and ANES)}
\label{sec:exp-survey}

In this experiment, we LoRA fine-tune four LLMs on human responses to three large opinion surveys, to test whether supervised fine-tuning moves a model's answer diversity toward the diversity of the population it is asked to imitate through fine-tuning.

We consider three social surveys in this section. First, the General Social Survey (GSS) \citep{davern2025}, a nationally representative survey of U.S. adults since 1972. We use the cumulative dataset spanning 1972--2024, including data from 74,485 respondents with complete demographic information. Second, we use a random subset of one-third of the countries and one-third of the questions from wave 7 of the World Values Survey (WVS) \citep{haerpfer2020}, a cross-national survey of values (the sub-samples are used in the interest of saving compute). Third, we use data from the American National Election Studies (ANES) \citep{anes2021}, a long-running U.S. political survey.

\begin{figure}[tbp]
\centering
\includegraphics[width=\textwidth]{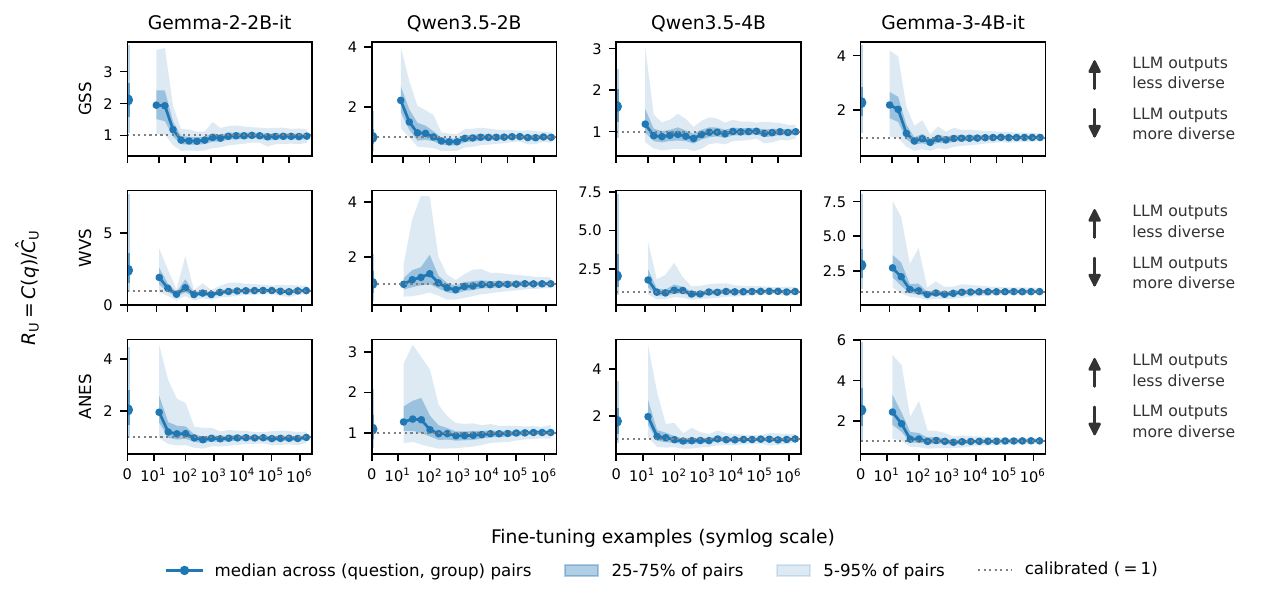}
\caption{\textbf{Experiment~2.} Collision ratio $R_{\mathrm U}=C(q)/\hat C_{\mathrm U}$ ($1$ = calibrated) against fine-tuning examples ($20$ sample sizes for the GSS, $18$ each for the WVS and ANES). $\hat C_{\mathrm U}$ estimates how often two distinct human respondents give the same answer; $C(q)$ is the probability that two independent model answers match, using probabilities normalized over the listed options. Each panel shows the median collision ratio across evaluated question--group pairs for early-stopped adapters (blue line), with shaded bands spanning the $25$th--$75$th and $5$th--$95$th percentiles across pairs. The marker at $n=0$ is the zero-shot base model, with vertical whiskers repeating the two bands; the horizontal axis is symmetric-log so that $n=0$ is shown on the same axis as the sweep, and each panel has its own y-axis limits. }
\label{fig:exp4fullmatrix}
\end{figure}

Respondents are partitioned into demographic groups based on characteristics such as country, age, sex, race/urban-rural status, region, and education. For each fixed question and demographic group $X$, let $Y$ denote the answer and let $m_a$ of the $m$ respondents choose answer $a$. We estimate human collision probability as the proportion of pairs of distinct respondents who give the same answer: $\hat C_{\mathrm U}:=\sum_a m_a(m_a-1)/[m(m-1)]$. The prompt presented to the model includes the demographic group as a persona, the survey question, and its answer options, and asks the model to select a single answer option. Applying softmax over the answer-option logits gives $q(a\mid X)$, the model's probability of answer $a$ for the fixed question and demographic group $X$. The model's collision probability is $C(q):=\sum_a q(a\mid X)^2$. We report $R_{\mathrm U}:=C(q)/\hat C_{\mathrm U}$, which estimates the collision ratio in Eq.~\eqref{eq:relative-diversity} using $\hat C_{\mathrm U}$ as the human collision estimate.

We repeat this experiment using $376$ GSS questions \citep{davern2025}, $53$ WVS questions \citep{haerpfer2020}, and $44$ ANES questions \citep{anes2021}---and four models---\texttt{gemma-2-2b-it} \citep{gemma_2}, \texttt{Qwen3.5-2B} and \texttt{Qwen3.5-4B} \citep{qwen}, and \texttt{gemma-3-4b-it} \citep{gemma_3}---using the split defined in Table~\ref{tab:exp4counts} of Appendix \ref{app:survey-data-prompts}. For each survey, model, and fine-tuning set of $n$ respondent--question examples, we evaluate $R_{\mathrm U}$ for every question--group pair with at least $60$ respondents. We include the zero-shot model as $n=0$. This gives a fixed set of $18{,}270$ pairs on the GSS, $8{,}015$ on the WVS, and $8{,}211$ on the ANES. Figure~\ref{fig:exp4fullmatrix} plots, at each $n$, median pair-level ratios for early-stopped adapters, with shaded bands for the $25$--$75\%$ and $5$--$95\%$ spread across pairs. Movement of the median ratio toward $1$ means that model answer diversity is approaching the human level for a typical question--group pair.

Three of the four base models mode-collapse across all surveys; the fourth starts closest to median calibration. Zero-shot, \texttt{gemma-2-2b-it}, \texttt{Qwen3.5-4B}, and \texttt{gemma-3-4b-it} have median collision ratios $R_{\mathrm U}$ between $1.61$ and $2.92$, corresponding to roughly one-third to two-thirds of the human effective diversity. The collapse is most pronounced on the WVS---the sparsest and only cross-national survey. The three collapsed models all have higher median collision ratios on the WVS than on the U.S. surveys (Table~\ref{tab:exp4fullmatrix} in Appendix~\ref{app:survey-results}). The severity of mode collapse therefore varies with both the model and the survey population.

At the smallest $n$, the three collapsed models stay near their zero-shot collision ratio. \texttt{Qwen3.5-2B}, the model closest to median calibration at $n=0$, is instead driven above its zero-shot collision ratio during early fine-tuning on all three surveys---to $2.22$ on the GSS, leaving less than half its starting effective diversity---before recovering (Table~\ref{tab:exp4fullmatrix} in Appendix~\ref{app:survey-results}). Beyond roughly $3{,}000$ fine-tuning examples, all twelve survey--model combinations maintain median collision ratios within $10\%$ of the calibrated value of $1$. \texttt{Qwen3.5-2B}'s early increase in collision ratio moves it away from $1$, showing that small fine-tuning datasets can worsen diversity calibration. The later approach to $1$ across all models and surveys shows that larger human datasets improve calibration.


\begin{figure}[tbp]
\centering
\includegraphics[width=\textwidth]{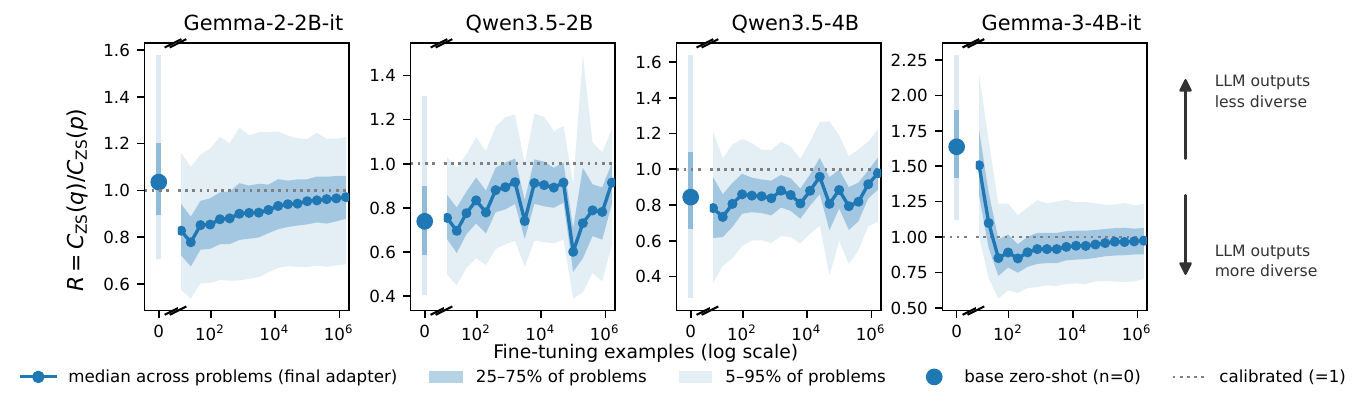}
\caption{\textbf{Experiment~3. CodeNet diversity under normalized Zhang--Shasha similarity.} Collision ratio $C_{\mathrm{ZS}}(q)/C_{\mathrm{ZS}}(p)$ against SFT examples ($1$ = calibrated). Lines show median finite collision ratios across problems; shaded bands span the $25$th--$75$th and $5$th--$95$th percentiles. At the full fine-tuning dataset, each model's median uses all $2{,}647$ candidate problems, while base markers cover $2{,}646$, $2{,}569$, $986$, and $2{,}646$ problems in panel order. The text separately compares the same problems before fine-tuning and after fine-tuning on the full dataset. Panels have separate y-axes and broken horizontal axes.}
\label{fig:codenet-full}
\end{figure}

\subsection{Experiment 3: code generation using CodeNet dataset}
\label{sec:exp-code}

In contrast to the previous section, exact sequence collisions are too rare to estimate for complete programs. So on CodeNet \citep{puri2021codenet} we compare model generations with accepted Python submissions using structural similarity. We fine-tune the same four LLMs as in Experiment~2 on accepted Python submissions. Before fine-tuning and at each fine-tuning-set size, each model generates $32$ programs per problem. We map each program $y$ to an ordered labeled canonical Python abstract syntax tree $T(y)$, removing comments and formatting and canonicalizing identifiers and literals. 

\begin{samepage}
Let $d_{\mathrm{ZS}}$ be the Zhang--Shasha tree edit distance \citep{zhangshasha1989}: the minimum number of node insertions, deletions, and relabelings needed to transform one tree into the other, with each operation costing one. We define
\begin{equation}
\textstyle k_{\mathrm{ZS}}(y,y'):=
1-\frac{d_{\mathrm{ZS}}\!\left(T(y),T(y')\right)}{|T(y)|+|T(y')|}.
\label{eq:codenetkernel}
\end{equation}
where $|T(y)|$ is the number of nodes in $T(y)$.
\end{samepage}
This similarity kernel $k_{\mathrm{ZS}}(y,y')$ lies in $[0,1]$ and equals $1$ for identical trees. For each problem, $p$ denotes the distribution of accepted human submissions and $q$ the model's output distribution. The kernel collisions $C_{\mathrm{ZS}}(p)$ and $C_{\mathrm{ZS}}(q)$ in Eq.~\eqref{eq:kernel-collision} measure expected similarity within each source using Eq.~\eqref{eq:codenetkernel}, conditional on both programs parsing as Python. For each source separately, we estimate its collision by averaging $k_{\mathrm{ZS}}(y,y')$ over up to $32$ distinct unordered pairs of parseable programs sampled without replacement. Figure~\ref{fig:codenet-full} reports the median finite ratio $C_{\mathrm{ZS}}(q)/C_{\mathrm{ZS}}(p)$ over a fixed panel of $2{,}647$ candidate problems. The base models' median collision ratios lie on both sides of the accepted-program target: both Qwen models are over-diverse, while both Gemma models are under-diverse. After fine-tuning on the full dataset, each model's median uses ratios from all candidate problems. To check whether changes in which problems enter the median explain the shift toward calibration, we compare each model before and after fine-tuning on the full dataset using only problems included at both stages (Appendix~\ref{app:codenet-data-prompts}). All four medians still move closer to $1$, so the improvement persists when the set of problems is held fixed. Appendix~\ref{app:codenet-data-prompts} gives the complete construction, prompt, fine-tuning configuration, and program examples.


\section{Conclusion}
\label{sec:discussion}
LLMs can exhibit either mode collapse or over-dispersion relative to a target population, even though recent literature has mostly emphasized mode-collapse. Finite-sample fitting has no built-in direction: variance and squared bias contribute to increasing the expected collision ratio, while target--bias alignment can offset them. The absolute difference between collision probabilities of the model and the target population is bounded by the square root of KL divergence, so approaching the target distribution forces diversity calibration. Across four models, our social survey and coding experiments show that increasing fine-tuning data moves diversity toward the target from both directions. These results support evaluating diversity relative to a specified target alongside standard performance benchmarks, and motivate fine-tuning methods that use the diversity-calibrating properties of SFT. Obtaining sufficient coverage of the target distribution remains a challenge when data are limited.

\clearpage
\section*{AI use statement}

In this work, we used generative AI tools to: formulate mathematical claims,
provide critical ingredients for proving mathematical claims, assist in the
writing of proofs, support qualitative and thematic data analysis, implement
methods, polish writing. We take responsibility for the final content of this
work, including text, claims or artifacts produced with the aid of generative
AI.

\bibliographystyle{iclr2027_conference}
\bibliography{references}

@misc{anes2021,
  author       = {{American National Election Studies}},
  title        = {{ANES Time Series Cumulative Data File, 1948--2020 [dataset and documentation]}},
  year         = {2022},
  note         = {September 16, 2022 version},
  url          = {https://electionstudies.org/data-center/anes-time-series-cumulative-data-file/}
}

@inproceedings{anderson2024,
  author       = {Anderson, B. R. and Shah, J. H. and Kreminski, M.},
  title        = {{Homogenization effects of large language models on human creative ideation}},
  booktitle    = {ACM Creativity and Cognition},
  pages        = {413--425},
  year         = {2024},
  url          = {https://doi.org/10.1145/3635636.3656204}
}

@article{argyle2023,
  author       = {Argyle, L. P. and Busby, E. C. and Fulda, N. and Gubler, J. R. and Rytting, C. and Wingate, D.},
  title        = {{Out of One, Many: Using language models to simulate human samples}},
  journal      = {Political Analysis},
  volume       = {31},
  number       = {3},
  pages        = {337--351},
  year         = {2023},
  url          = {https://doi.org/10.1017/pan.2023.2}
}

@article{arora2017,
  author       = {Arora, S. and Zhang, Y.},
  title        = {{Do GANs actually learn the distribution? An empirical study}},
  journal      = {arXiv preprint arXiv:1706.08224},
  year         = {2017}
}

@inproceedings{arora2018,
  author       = {Arora, S. and Risteski, A. and Zhang, Y.},
  title        = {{Do GANs learn the distribution? Some theory and empirics}},
  booktitle    = {ICLR},
  year         = {2018}
}

@inproceedings{arora2022,
  author       = {Arora, A. and Meister, C. and Cotterell, R.},
  title        = {{Estimating the entropy of linguistic distributions}},
  booktitle    = {ACL},
  pages        = {175--195},
  year         = {2022},
  url          = {https://doi.org/10.18653/v1/2022.acl-short.20}
}

@article{banayeeanzade2026,
  author       = {Banayeeanzade, A. and Yang, Q. and Tarsadiya, D. and Bahrani, F. and Blas, L. and Samuel, A. and Jia, R. and Razaviyayn, M. and Karimireddy, S. P.},
  title        = {{Sampling More, Getting Less: Calibration is the diversity bottleneck in LLMs}},
  journal      = {arXiv preprint arXiv:2605.11128},
  year         = {2026}
}

@article{bisbee2024,
  author       = {Bisbee, J. and Clinton, J. D. and Dorff, C. and Kenkel, B. and Larson, J. M.},
  title        = {{Synthetic replacements for human survey data? The perils of large language models}},
  journal      = {Political Analysis},
  volume       = {32},
  number       = {4},
  pages        = {401--416},
  year         = {2024},
  url          = {https://doi.org/10.1017/pan.2024.5}
}

@inproceedings{chakrabarty2024,
  author       = {Chakrabarty, T. and Padmakumar, V. and Brahman, F. and Muresan, S.},
  title        = {{Creativity support in the age of large language models: An empirical study involving professional writers}},
  booktitle    = {ACM Creativity and Cognition},
  pages        = {132--155},
  year         = {2024},
  url          = {https://doi.org/10.1145/3635636.3656201}
}

@inproceedings{chakraborty2024,
  author       = {Chakraborty, S. and Qiu, J. and Yuan, H. and Koppel, A. and Manocha, D. and Huang, F. and Bedi, A. S. and Wang, M.},
  title        = {{MaxMin-RLHF: Alignment with diverse human preferences}},
  booktitle    = {ICML},
  series       = {PMLR},
  volume       = {235},
  pages        = {6116--6135},
  year         = {2024}
}

@article{chen2026,
  author       = {Chen, Y. and Liu, Y. and Meng, F.},
  title        = {{SED-SFT: Selectively encouraging diversity in supervised fine-tuning}},
  journal      = {arXiv preprint arXiv:2602.07464},
  year         = {2026}
}

@article{chung2025,
  author       = {Chung, J. J. Y. and Padmakumar, V. and Roemmele, M. and Sun, Y. and Kreminski, M.},
  title        = {{Modifying large language model post-training for diverse creative writing}},
  journal      = {arXiv preprint arXiv:2503.17126},
  year         = {2025}
}

@misc{davern2025,
  author       = {Davern, M. and Bautista, R. and Freese, J. and Herd, P. and Morgan, S. L.},
  title        = {{General Social Survey 1972--2024 [machine-readable data file]}},
  howpublished = {NORC ed. Chicago: NORC at the University of Chicago [producer and distributor]},
  year         = {2025},
  note         = {Data accessed from the GSS Data Explorer website},
  url          = {https://gssdataexplorer.norc.org}
}

@article{doshi2024,
  author       = {Doshi, A. R. and Hauser, O. P.},
  title        = {{Generative AI enhances individual creativity but reduces the collective diversity of novel content}},
  journal      = {Science Advances},
  volume       = {10},
  number       = {28},
  eid          = {eadn5290},
  year         = {2024},
  url          = {https://doi.org/10.1126/sciadv.adn5290}
}

@inproceedings{finlayson2024,
  author       = {Finlayson, M. and Hewitt, J. and Koller, A. and Swayamdipta, S. and Sabharwal, A.},
  title        = {{Closing the curious case of neural text degeneration}},
  booktitle    = {ICLR},
  year         = {2024}
}

@article{friedman2022,
  author       = {Friedman, D. and Dieng, A. B.},
  title        = {{The Vendi Score: A diversity evaluation metric for machine learning}},
  journal      = {Transactions on Machine Learning Research},
  year         = {2023}
}

@misc{girotra2023,
  author       = {Terwiesch, C. and Meincke, L. and Girotra, K. and Mollick, E. R. and Nave, G. and Ulrich, K. T.},
  title        = {{AI and its impact on creativity and diversity: An empirical study of LLM-generated product ideas}},
  howpublished = {SSRN 4526071},
  year         = {2024},
  url          = {https://doi.org/10.2139/ssrn.4526071}
}

@inproceedings{gxchen2025,
  author       = {GX-Chen, A. and Prakash, J. and Guo, J. and Fergus, R. and Ranganath, R.},
  title        = {{KL-Regularized Reinforcement Learning for Generative Modelling is Designed to Mode Collapse}},
  booktitle    = {ICLR},
  year         = {2026}
}

@book{haerpfer2020,
  editor       = {Haerpfer, C. and Inglehart, R. and Moreno, A. and Welzel, C. and Kizilova, K. and Diez-Medrano, J. and Lagos, M. and Norris, P. and Ponarin, E. and Puranen, B.},
  title        = {{World Values Survey: Round Seven---Country-Pooled Datafile Version 6.0.0}},
  publisher    = {JD Systems Institute and WVSA Secretariat},
  address      = {Madrid, Spain and Vienna, Austria},
  year         = {2022},
  url          = {https://doi.org/10.14281/18241.24}
}

@inproceedings{holtzman2019,
  author       = {Holtzman, A. and Buys, J. and Du, L. and Forbes, M. and Choi, Y.},
  title        = {{The curious case of neural text degeneration}},
  booktitle    = {ICLR},
  year         = {2020}
}

@article{jiao2015,
  author       = {Jiao, J. and Venkat, K. and Han, Y. and Weissman, T.},
  title        = {{Minimax estimation of functionals of discrete distributions}},
  journal      = {IEEE Transactions on Information Theory},
  volume       = {61},
  number       = {5},
  pages        = {2835--2885},
  year         = {2015},
  url          = {https://doi.org/10.1109/TIT.2015.2412945}
}

@article{karouzos2026,
  author       = {Karouzos, C. and Tan, X. and Aletras, N.},
  title        = {{Where does output diversity collapse in post-training?}},
  journal      = {arXiv preprint arXiv:2604.16027},
  year         = {2026}
}

@inproceedings{kirk2024,
  author       = {Kirk, R. and Mediratta, I. and Nalmpantis, C. and Luketina, J. and Hambro, E. and Grefenstette, E. and Raileanu, R.},
  title        = {{Understanding the effects of RLHF on LLM generalisation and diversity}},
  booktitle    = {ICLR},
  year         = {2024}
}

@article{klypa2026,
  author       = {Klypa, R. and Cherednichenko, O.},
  title        = {{Diversity in large language models under supervised fine-tuning}},
  journal      = {arXiv preprint arXiv:2605.00195},
  year         = {2026}
}

@inproceedings{li2025,
  author       = {Li, Z. and Chen, C. and Xu, T. and Qin, Z. and Xiao, J. and Luo, Z.-Q. and Sun, R.},
  title        = {{Preserving diversity in supervised fine-tuning of large language models}},
  booktitle    = {ICLR},
  year         = {2025}
}

@article{orlitsky2016,
  author       = {Orlitsky, A. and Suresh, A. T. and Wu, Y.},
  title        = {{Optimal prediction of the number of unseen species}},
  journal      = {PNAS},
  volume       = {113},
  number       = {47},
  pages        = {13283--13288},
  year         = {2016},
  url          = {https://doi.org/10.1073/pnas.1607774113}
}

@inproceedings{padmakumar2023,
  author       = {Padmakumar, V. and He, H.},
  title        = {{Does writing with language models reduce content diversity?}},
  booktitle    = {ICLR},
  year         = {2024},
  note         = {arXiv:2309.05196}
}

@article{paninski2003,
  author       = {Paninski, L.},
  title        = {{Estimation of entropy and mutual information}},
  journal      = {Neural Computation},
  volume       = {15},
  number       = {6},
  pages        = {1191--1253},
  year         = {2003},
  url          = {https://doi.org/10.1162/089976603321780272}
}

@book{pinsker1964,
  author       = {Pinsker, M. S.},
  title        = {{Information and Information Stability of Random Variables and Processes}},
  publisher    = {Holden-Day},
  year         = {1964}
}

@inproceedings{puri2021codenet,
  author       = {Puri, R. and Kung, D. S. and Janssen, G. and Zhang, W. and Domeniconi, G. and Zolotov, V. and Dolby, J. T. and Chen, J. and Choudhury, M. and Decker, L. and Thost, V. and Buratti, L. and Pujar, S. and Ramji, S. and Finkler, U. and Malaika, S. and Reiss, F.},
  title        = {{CodeNet: A large-scale AI for code dataset for learning a diversity of coding tasks}},
  booktitle    = {NeurIPS Datasets and Benchmarks},
  year         = {2021}
}

@inproceedings{rafailov2023,
  author       = {Rafailov, R. and Sharma, A. and Mitchell, E. and Manning, C. D. and Ermon, S. and Finn, C.},
  title        = {{Direct Preference Optimization: Your language model is secretly a reward model}},
  booktitle    = {NeurIPS},
  volume       = {36},
  pages        = {53728--53741},
  year         = {2023}
}

@article{seddik2024,
  author       = {Seddik, M. E. A. and Chen, S.-W. and Hayou, S. and Youssef, P. and Debbah, M.},
  title        = {{How bad is training on synthetic data? A statistical analysis of language model collapse}},
  journal      = {arXiv preprint arXiv:2404.05090},
  year         = {2024}
}

@article{shumailov2024,
  author       = {Shumailov, I. and Shumaylov, Z. and Zhao, Y. and Papernot, N. and Anderson, R. and Gal, Y.},
  title        = {{AI models collapse when trained on recursively generated data}},
  journal      = {Nature},
  volume       = {631},
  pages        = {755--759},
  year         = {2024},
  url          = {https://doi.org/10.1038/s41586-024-07566-y}
}

@inproceedings{siththaranjan2024,
  author       = {Siththaranjan, A. and Laidlaw, C. and Hadfield-Menell, D.},
  title        = {{Distributional preference learning: Understanding and accounting for hidden context in RLHF}},
  booktitle    = {ICLR},
  year         = {2024},
  note         = {arXiv:2312.08358}
}

@inproceedings{slocum2025,
  author       = {Slocum, S. and Parker-Sartori, A. and Hadfield-Menell, D.},
  title        = {{Diverse preference learning for capabilities and alignment}},
  booktitle    = {ICLR},
  year         = {2025}
}

@article{sourati2025,
  author       = {Sourati, Z. and Ziabari, A. S. and Dehghani, M.},
  title        = {{The homogenizing effect of large language models on human expression and thought}},
  journal      = {arXiv preprint arXiv:2508.01491},
  year         = {2026}
}

@inproceedings{tevet2021,
  author       = {Tevet, G. and Berant, J.},
  title        = {{Evaluating the evaluation of diversity in natural language generation}},
  booktitle    = {EACL},
  pages        = {326--346},
  year         = {2021},
  url          = {https://doi.org/10.18653/v1/2021.eacl-main.25}
}

@article{verbalized2025,
  author       = {Zhang, J. and Yu, S. and Chong, D. and Sicilia, A. and Tomz, M. R. and Manning, C. D. and Shi, W.},
  title        = {{Verbalized Sampling: How to Mitigate Mode Collapse and Unlock LLM Diversity}},
  journal      = {arXiv preprint arXiv:2510.01171},
  year         = {2025},
  url          = {https://arxiv.org/abs/2510.01171}
}

@article{zhangshasha1989,
  author       = {Zhang, K. and Shasha, D.},
  title        = {{Simple fast algorithms for the editing distance between trees and related problems}},
  journal      = {SIAM Journal on Computing},
  volume       = {18},
  number       = {6},
  pages        = {1245--1262},
  year         = {1989},
  url          = {https://doi.org/10.1137/0218082}
}

@article{wang2025,
  author       = {Wang, A. and Morgenstern, J. and Dickerson, J. P.},
  title        = {{Large language models that replace human participants can harmfully misportray and flatten identity groups}},
  journal      = {Nature Machine Intelligence},
  volume       = {7},
  number       = {3},
  pages        = {400--411},
  year         = {2025},
  url          = {https://doi.org/10.1038/s42256-025-00986-z}
}

@inproceedings{welleck2019,
  author       = {Welleck, S. and Kulikov, I. and Roller, S. and Dinan, E. and Cho, K. and Weston, J.},
  title        = {{Neural text generation with unlikelihood training}},
  booktitle    = {ICLR},
  year         = {2020}
}

@article{xie2025,
  author       = {Xie, Y. and Xie, Y.},
  title        = {{Variance reduction in output from generative AI}},
  journal      = {arXiv preprint arXiv:2503.01033},
  year         = {2025}
}

@article{xie2026,
  author       = {Xie, Y. and Liang, L. and Li, S. and Lu, Y. and Xiao, Z. and Shi, M. and Huang, J. and Wang, M. and Xie, Y.},
  title        = {{Evaluating the statistical realism of LLM-generated social science data}},
  journal      = {PNAS},
  volume       = {123},
  number       = {19},
  eid          = {e2538145123},
  year         = {2026},
  url          = {https://doi.org/10.1073/pnas.2538145123}
}

@article{gemma_2,
    title={Gemma},
    url={https://www.kaggle.com/m/3301},
    DOI={10.34740/KAGGLE/M/3301},
    publisher={Kaggle},
    author={{Gemma Team}},
    year={2024}
}

@article{gemma_3,
    title={Gemma 3},
    url={https://goo.gle/Gemma3Report},
    publisher={Kaggle},
    author={{Gemma Team}},
    year={2025}
}

@misc{qwen,
    title  = {{Qwen3.5}: Towards Native Multimodal Agents},
    author = {{Qwen Team}},
    month  = {February},
    year   = {2026},
    url    = {https://qwen.ai/blog?id=qwen3.5}
}

\newpage

\long\def\exptwoappendix{%
\noindent {\bf How the data are generated.} In setting~(a), the ground-truth language is a time-homogeneous latent-factor $k$-gram process with exact closed-form conditionals, vocabulary $|V|=1024$ and Markov order $k=16$. The token $x_t$ at position $t$ in a sequence depends only on the trailing window $h_t:=x_{t-16:t-1}$, where $x_{<t}$ denotes the sequence prefix before step $t$. Its next-token probabilities are given by the tempered softmax of a frozen random linear map $\ell$ of the positionally pooled context embeddings:
\begin{equation}
p(x_t\mid x_{<t})
  = p(x_t\mid h_t)
  := \bigl[\softmax\!\bigl(\ell(h_t)/(\tau_{\mathrm{gen}}\,s)\bigr)\bigr]_{x_t},
\label{eq:synth-markov}
\end{equation}
\begin{equation}
\textstyle \ell(h)
  := \mathrm{pool}_w(E_h)\,W_1^\top W_2^\top W_{\mathrm{out}}^\top + b,
\qquad
\mathrm{pool}_w(E_h):=\sum_{j=1}^{16} w_j\,E_{h_j},
\label{eq:synth-logits}
\end{equation}
where $E\in\mathbb{R}^{1024\times128}$ is the token embedding table, $E_h$ collects the context-token embeddings $E_{h_j}$, and $w\in\mathbb{R}^{16}$ contains positional weights $w_j$. The matrices $W_1\in\mathbb{R}^{256\times128}$ and $W_2\in\mathbb{R}^{256\times256}$ are successive linear maps, $W_{\mathrm{out}}\in\mathbb{R}^{1024\times256}$ maps to token logits, and $b$ is the logit-bias vector. The generation temperature is $\tau_{\mathrm{gen}}=0.6$, and $s$ is the standard deviation of logits on a reference sample of 16-grams, so temperature is invariant to the overall scale of the random weights. Every ingredient is drawn at random once from the master seed $42$ and fixed, so the language is fully reproducible. There is no intermediate nonlinearity. Because $\ell$ is linear of rank at most $256$, a small GPT can represent the generating distribution $p$ exactly and any gap between model and truth would not be limited by model capacity. Sequences have length $64$: the first $k=16$ tokens are drawn uniformly at random, and each subsequent token is sampled by ancestral multinomial draws from $p(\cdot\mid h)$ on the trailing $16$-token window; the pre-training contexts are the $16$-token windows at every prediction position $16,\dots,63$. The pre-training sample size $N$ counts sampled sequences, each contributing $48$ supervised next-token targets (the data seed is derived from $42$ per $N$). Because the conditionals are closed-form, the exact target diversity $N_2(p_h)=1/\sum_a p(a\mid h)^2$ is known at every context; it is high and varied (median $177$, range $7$--$398$). At $k=16$ exact contexts essentially never repeat, so the count-based/tabular prediction of Appendix~\ref{sec:result2} is structurally undefined here.
}

\long\def\exptwoconfigappendix{%
\noindent {\bf Diffuse prior in setting~(a).} For each seed, we initialize the GPT using its default random initialization, without preliminary fitting. The starting distribution $q_{0,h}$ is its softmax output at context $h$ before pre-training on the synthetic language. It is generally nonuniform and varies with context and initialization seed.

\noindent {\bf What we measure.} We report the ensemble normalized gap
$G_h=\E[R]-1=\E[C_h(q_{\hth})]/C_h(p)-1$ and its three $C(p_h)$-normalized terms from
Eq.~\eqref{eq:ifbox}. Isolating them requires independently pre-trained models at each $N$. We pre-train $100$ seeds at
each of $100$ log-spaced sequence counts from $32$k to $1$M ($10{,}000$ GPTs; $6.135$M parameters) and evaluate
them on $2{,}000$ common contexts. Each seed draws its own data, initialization, and minibatch stream, so the
covariance captures their total across-run variation. The closed-form $p_h$ permits direct estimation of $b_h$
from the across-seed mean. All validation losses are below the uniform baseline $\log 1024\approx6.93$ nats.
}

\long\def\expthreeappendix{%
\noindent {\bf Setup.} We fine-tune a $27.97$M-parameter GPT ($d_{\text{model}}=256$, $4$ heads, $3$ layers,
$K=50{,}000$) on a heavy-tailed order-$16$ Zipf-plus-context language with exponent $1.07$. Each length-$17$
sequence contributes one prediction target. Before SFT, we select the $32$ most probable tokens in the target language's fixed token-frequency distribution and rescale their probabilities to sum to one. We then pre-train each GPT for $400$ steps on random contexts, using this same desired next-token distribution for every context.
The analyzed artifacts contain $100$ seeds at each of $100$ sequence counts from $500$ to $10^6$; every model then
undergoes $4{,}000$ SFT steps.
}

\long\def\surveydataandpromptsappendix{%
\noindent {\bf Datasets.} We study three long-running probability surveys of human attitudes. The General Social Survey (GSS) is a nationally representative survey of adults in the United States, conducted since $1972$; we use the $1972$--$2024$ cumulative file ($74{,}485$ respondents with complete demographics). The World Values Survey (WVS) is a cross-national survey of social, political, religious, and economic values; we use wave~7 (fieldwork $2017$--$2023$). For computational tractability we restrict the WVS to a random one-third of its countries and a random one-third of its measurable questions, both drawn with a fixed seed ($42$) and frozen before any distribution is computed; the resulting subset comprises $32{,}227$ respondents with complete demographics across $22$ countries spanning every inhabited continent (e.g.\ India, Pakistan, Kenya, Morocco, Serbia, South Korea, Canada, the Netherlands). The American National Election Studies (ANES) is a long-running U.S.\ political survey; we use the Time-Series Cumulative Data File ($1948$--$2020$; $69{,}784$ respondents with complete demographics). From each survey we take every suitable categorical opinion item---substantive attitude/value/policy questions with a small fixed answer set, with value labels transcribed verbatim from each survey's codebook (no guessed labels)---excluding identifiers, weights, dates, interviewer/geographic metadata, raw numerics, multi-select ``mention'' batteries, and behavioral/frequency/factual items. For the GSS, items are identified by their attitudinal value-label sets (favor/oppose, agree--disagree, spending too-little/too-much, confidence, importance, likely, should, true/false). We then keep only the measurable items---those with at least one demographic group of $\ge 60$ respondents, the same threshold used for respondent-sample target estimation---leaving $376$ items for the GSS, $159$ for the WVS (of which the random-third restriction above retains $53$), and $44$ for the ANES; every retained item is evaluated. The seven curated items used in an earlier version of this experiment (capital punishment \texttt{cappun}, gun permits \texttt{gunlaw}, welfare spending \texttt{natfare}, happiness \texttt{happy}, financial satisfaction \texttt{satfin}, political views \texttt{polviews}, party identification \texttt{partyid} for the GSS, and the analogous WVS seven) are a strict subset of the GSS measurable set, which we exploit for the seven-vs-full comparison (GSS only: the WVS random-third restriction does not retain all seven curated WVS items).

\noindent {\bf Grouping, conditioning, and respondent-sample target.} Respondents are partitioned into demographic groups $X$ defined by a tuple of variables, brought to GSS parity across surveys:
\begin{itemize}
\item \textbf{GSS}: $X=(\text{decade},\ \text{age bucket},\ \text{sex},\ \text{race},\ \text{region},\ \text{degree})$;
\item \textbf{WVS}: $X=(\text{country},\ \text{survey year},\ \text{urban/rural},\ \text{age bucket},\ \text{sex},\ \text{education})$;
\item \textbf{ANES}: $X=(\text{decade},\ \text{age bucket},\ \text{sex},\ \text{race},\ \text{region},\ \text{education})$,
\end{itemize}
Age was bucketed into groups $\{18\text{--}29,30\text{--}44,45\text{--}59,60+\}$ in the GSS and $\{16\text{--}29,30\text{--}44,45\text{--}59,60+\}$ in the WVS and ANES. ANES and WVS aggregate calendar years (to decade and fieldwork year, respectively) to avoid sparse groups. For each fixed question and group $X$, with $Y$ denoting the answer, observed counts $m_a$ among $m$ respondents define the unweighted plug-in conditional $\hat p_a=m_a/m$ and plug-in collision $\hat C_{\mathrm{plug}}=\sum_a\hat p_a^2$. We report instead the distinct-pair U-statistic
$\hat C_{\mathrm U}=\sum_a m_a(m_a-1)/[m(m-1)]=(m\hat C_{\mathrm{plug}}-1)/(m-1)$.
The target uses all eligible respondents, including fine-tuning and validation records. It represents this unweighted sample, not a held-out or survey-weighted estimate of the wider population. We compute each pair's collision ratio $R_{\mathrm U}=C(q_{\mathrm{opt}})/\hat C_{\mathrm U}$ before taking medians across pairs. Evaluation requires at least $60$ respondents for a question--group pair. Table~\ref{tab:exp4counts} gives the sample sizes.

\begin{table}[H]
\centering
\caption{\textbf{Experiment~2 datasets (measurable-only).} Each survey is restricted to its measurable opinion questions (a question with $\ge 1$ group of $\ge 60$ respondents). ``Groups'' is the number of demographic tuples $X$; ``eval groups'' counts groups holding $\ge 60$ respondents; ``eval pairs'' counts evaluated $(\text{question},X)$ pairs. ``Examples'' are pooled (respondent, question) pairs, split at the respondent level; the $n$-grid is anchored to each survey's full pool $N$ and halves down to the smallest $n\ge 8$, so each survey has its own grid.}
\label{tab:exp4counts}
\smallskip
\footnotesize
\begin{tabular}{@{}l r c r r r r@{}}
\toprule
Survey & Respondents & Countries & Groups & Eval groups ($\ge60$) & Eval pairs & Questions \\
\midrule
GSS  & $74{,}485$ & --- (US) & $2{,}512$ & $345$ & $18{,}270$ & $376$ \\
WVS  & $32{,}227$ & $22$      & $956$ & $293$ & $8{,}015$ & $53$ \\
ANES & $69{,}784$ & --- (US) & $2{,}942$ & $352$ & $8{,}211$  & $44$ \\
\midrule
\multicolumn{7}{@{}p{0.95\textwidth}}{\footnotesize Examples (fine-tuning / validation), pooled (respondent, question) pairs: GSS $5{,}224{,}403$ ($4{,}706{,}940$ / $517{,}463$); WVS $1{,}620{,}333$ ($1{,}462{,}645$ / $157{,}688$); ANES $1{,}846{,}292$ ($1{,}662{,}788$ / $183{,}504$). Minimum-eval threshold $=60$ respondents per $(\text{question},X)$ pair. The $n$-grid is anchored per survey to the full fine-tuning pool $N$ and halves down to the smallest $n\ge 8$: GSS $20$ points $[4{,}706{,}940,\dots,8]$; WVS $18$ points $[1{,}462{,}645,\dots,11]$; ANES $18$ points $[1{,}662{,}788,\dots,12]$ (roughly six orders of magnitude on the GSS).} \\
\bottomrule
\end{tabular}
\end{table}

\noindent {\bf Prompt and measurement.} The prompt presents the group as a first-person persona followed by the question and lettered options, then requests one option letter; Tables~\ref{tab:gssprompt},~\ref{tab:wvsprompt}, and~\ref{tab:anesprompt} show the templates. At the generation position, we take the first token identifier of each letter's tokenizer encoding, assert that the candidate identifiers are distinct, and apply a softmax only across their logits. The resulting $q_{\mathrm{opt}}(Y\mid X)$ is exact conditional on the candidate set and has no sampling noise, but it is not the full next-token distribution. We fine-tune four models of at most $4$B parameters---\texttt{gemma-2-2b-it}, \texttt{Qwen3.5-2B}, \texttt{Qwen3.5-4B}, and \texttt{gemma-3-4b-it}---with shared hyperparameters and model-specific LoRA target scopes. A respondent-level split prevents leakage between the fine-tuning and validation splits, while the empirical target intentionally uses both splits. The pipeline saves an early-stopped adapter and the final adapter; callback state is not persisted across requeues, so the former is not guaranteed to be the global minimum-validation-loss checkpoint. The sample-size grid halves each survey's fine-tuning pool $N$ down to the smallest $n\ge8$ (GSS: $20$ points, $4{,}706{,}940$ to $8$; WVS: $18$, $1{,}462{,}645$ to $11$; ANES: $18$, $1{,}662{,}788$ to $12$). For multimodal \texttt{gemma-3-4b-it}, LoRA and measurement use only the text decoder.

\begin{table}[H]
\centering
\caption{\textbf{GSS prompt format.} The persona template (top) is instantiated from a group's variables; the model then receives the question with its lettered (graded) options and must reply with one letter. Five curated items are shown with their graded options and an illustrative answer; the full measurable set of $376$ items is evaluated.}
\label{tab:gssprompt}
\smallskip
\fbox{\begin{minipage}{0.95\textwidth}\footnotesize\ttfamily
It is the \{decade\}. You are a \{age\} year-old \{race\} \{sex\} living in the \{region\} region of the United States. Your highest education credential is: \{degree\}.\\[2pt]
\{question\}\\[2pt]
Options:\ \ A) \{option A\}\ \ B) \{option B\}\ \ \dots\\[2pt]
Respond with only the single letter of the option that best matches your view.
\end{minipage}}

\smallskip
\footnotesize
\begin{tabular}{@{}p{2.6cm} p{8.2cm} c@{}}
\toprule
Item (\texttt{var}) & Graded answer options & Ex.\\
\midrule
Death penalty for murder (\texttt{cappun}) & A) favor;\ B) oppose & A \\
General happiness (\texttt{happy}) & A) very happy;\ B) pretty happy;\ C) not too happy & B \\
Financial satisfaction (\texttt{satfin}) & A) pretty well satisfied;\ B) more or less satisfied;\ C) not satisfied at all & B \\
Political views (\texttt{polviews}) & A) extremely liberal;\ B) liberal;\ C) slightly liberal;\ D) moderate;\ E) slightly conservative;\ F) conservative;\ G) extremely conservative & D \\
Party identification (\texttt{partyid}) & A) strong democrat;\ B) not very strong democrat;\ C) independent, close to democrat;\ D) independent;\ E) independent, close to republican;\ F) not very strong republican;\ G) strong republican;\ H) other party & A \\
\bottomrule
\end{tabular}
\end{table}

\begin{table}[H]
\centering
\caption{\textbf{WVS prompt format.} Same template as the GSS up to the persona variables (country / year / urban-rural / age / sex / education). Five curated items are shown with their graded options and an illustrative answer; a random third of the measurable items ($53$ of $159$, seed $42$) is evaluated.}
\label{tab:wvsprompt}
\smallskip
\fbox{\begin{minipage}{0.95\textwidth}\footnotesize\ttfamily
It is \{year\}. You live in \{urban/rural area\} of \{country\}. You are a \{age\} year-old \{sex\}. Your highest level of education is \{education\}.\\[2pt]
\{question\}\\[2pt]
Options:\ \ A) \{option A\}\ \ B) \{option B\}\ \ \dots\\[2pt]
Respond with only the single letter of the option that best matches your view.
\end{minipage}}

\smallskip
\footnotesize
\begin{tabular}{@{}p{2.6cm} p{8.2cm} c@{}}
\toprule
Item (\texttt{var}) & Graded answer options & Ex.\\
\midrule
Interpersonal trust (\texttt{trust}) & A) most people can be trusted;\ B) need to be very careful & B \\
Feeling of happiness (\texttt{happy}) & A) very happy;\ B) rather happy;\ C) not very happy;\ D) not at all happy & B \\
Confidence in government (\texttt{conf\_govt}) & A) a great deal;\ B) quite a lot;\ C) not very much;\ D) none at all & C \\
Justifiability of divorce (\texttt{divorce\_just}) & A) 1 (never justifiable);\ B) 2;\ \dots;\ I) 9;\ J) 10 (always justifiable) & C \\
Left--right self-placement (\texttt{left\_right}) & A) 1 (left);\ B) 2;\ \dots;\ I) 9;\ J) 10 (right) & E \\
\bottomrule
\end{tabular}
\end{table}

\begin{table}[H]
\centering
\caption{\textbf{ANES prompt format.} Same template up to the persona variables (decade / age / sex / race / region / education). Five illustrative items are shown with their graded options and an answer; the full measurable set of $44$ items is evaluated. Endpoint-anchored issue scales keep numbered interior points with labelled poles.}
\label{tab:anesprompt}
\smallskip
\fbox{\begin{minipage}{0.95\textwidth}\footnotesize\ttfamily
It is the \{decade\}s. You are a \{age\} year-old \{sex\} living in \{region\} of the United States. You identify as \{race\} and your highest level of education is \{education\}.\\[2pt]
\{question\}\\[2pt]
Options:\ \ A) \{option A\}\ \ B) \{option B\}\ \ \dots\\[2pt]
Respond with only the single letter of the option that best matches your view.
\end{minipage}}

\smallskip
\footnotesize
\begin{tabular}{@{}p{2.9cm} p{8.0cm} c@{}}
\toprule
Item (\texttt{var}) & Graded answer options & Ex.\\
\midrule
Party identification (\texttt{VCF0301}) & A) strong Democrat;\ B) weak Democrat;\ C) independent--Democrat;\ D) independent;\ E) independent--Republican;\ F) weak Republican;\ G) strong Republican & D \\
Liberal--conservative (\texttt{VCF0803}) & A) extremely liberal;\ \dots;\ D) moderate;\ \dots;\ G) extremely conservative & D \\
Government health insurance (\texttt{VCF0806}) & A) 1 (government insurance plan);\ B) 2;\ \dots;\ F) 6;\ G) 7 (private insurance plan) & D \\
Interest in elections (\texttt{VCF0310}) & A) not much interested;\ B) somewhat interested;\ C) very much interested & B \\
Presidential approval (\texttt{VCF0450}) & A) approve;\ B) disapprove & A \\
\bottomrule
\end{tabular}
\end{table}
}

\long\def\surveyconfigsappendix{%
\noindent {\bf Fine-tuning hyperparameters.} All four models use LoRA adapters with shared hyperparameters and model-specific target scope. Fine-tuning and validation use the same shifted mean cross-entropy over the encoded answer-letter token(s) followed by EOS; prompt and padding positions are masked. Runs use \texttt{bf16}, gradient checkpointing, and base \texttt{use\_cache} disabled. Tables~\ref{tab:exp4hparams} and~\ref{tab:exp4lora} give shared settings and per-model scope. The text models apply LoRA to seven attention and multilayer perceptron (MLP) projections in every decoder layer; \texttt{gemma-3-4b-it} applies the same projections only within its \texttt{language\_model}, leaving the vision tower fixed. The adapters modify $0.47\%$--$0.79\%$ of parameters ($10.9$--$29.8$ million).

\begin{table}[H]
\centering
\caption{\textbf{Experiment~2 shared fine-tuning hyperparameters.}}
\label{tab:exp4hparams}
\smallskip
\footnotesize
\begin{tabular}{@{}l l@{}}
\toprule
Hyperparameter & Value \\
\midrule
LoRA rank $r$ & $16$ \\
LoRA $\alpha$ & $32$ \\
LoRA dropout & $0.05$ \\
LoRA bias & none \\
Task type & causal language modeling \\
Base precision & \texttt{bf16} \\
Optimizer & AdamW (\texttt{adamw\_torch}) \\
Learning rate & $2\times10^{-4}$ \\
Learning-rate schedule & cosine decay \\
Warmup ratio & $0.03$ \\
Weight decay & $0$ \\
Max sequence length & $256$ tokens \\
Epochs & $3$ (no step cap) \\
Effective batch size & $16$ \\
Loss & cross-entropy on answer-letter encoding and EOS (prompt masked) \\
Validation subset & $4{,}000$ held-out pairs \\
Evaluations per run & $12$ (evenly spaced) \\
Checkpointing & every $500$ steps \\
Seed & $42$ \\
\bottomrule
\end{tabular}
\end{table}

\begin{table}[H]
\centering
\caption{\textbf{Experiment~2 per-model LoRA configuration.} The shared hyperparameters of Table~\ref{tab:exp4hparams} apply to all; the LoRA target scope differs. Target projections are \texttt{q\_proj}, \texttt{k\_proj}, \texttt{v\_proj}, \texttt{o\_proj}, \texttt{gate\_proj}, \texttt{up\_proj}, \texttt{down\_proj}; for \texttt{gemma-3-4b-it} these are matched only within the \texttt{language\_model} decoder (vision tower excluded). Trainable counts are from \texttt{get\_peft\_model}.}
\label{tab:exp4lora}
\smallskip
\footnotesize
\begin{tabular}{@{}l l r r@{}}
\toprule
Model & LoRA targets & Trainable params & \% of total \\
\midrule
\texttt{gemma-2-2b-it} & 7 proj., all decoder layers          & $20{,}766{,}720$ & $0.79\%$ \\
\texttt{Qwen3.5-2B}    & 7 proj., all decoder layers          & $10{,}911{,}744$ & $0.49\%$ \\
\texttt{Qwen3.5-4B}    & 7 proj., all decoder layers          & $21{,}233{,}664$ & $0.47\%$ \\
\texttt{gemma-3-4b-it} & 7 proj., \texttt{language\_model} only & $29{,}802{,}496$ & $0.69\%$ \\
\bottomrule
\end{tabular}
\end{table}
}

\long\def\codenetdatasetappendix{%
\noindent {\bf Dataset and estimand.}
Project CodeNet contains programming-contest problems, submitted programs, and metadata including language, user, timestamp, and judge status \citep{puri2021codenet}. We restrict the target to parseable, accepted Python submissions and compare implementation structure rather than lexical form or execution output. We measure expected similarity under the normalized Zhang--Shasha kernel in Equation~\eqref{eq:codenetkernel}, conditional on both programs parsing successfully. For each problem and distribution, we average the kernel over a deterministic seed-$42$ sample of up to $32$ distinct unordered pairs, using every pair when fewer are available.

The metadata contain $13{,}916{,}868$ submissions, including $3{,}286{,}314$ labeled Python and $1{,}796{,}563$ both Python and accepted, across $3{,}113$ problems. We impose no restriction on users, source length, or revisions per user; multiple accepted revisions remain separate observations. A submission enters the usable target only if its description and source are available and its Python AST parses successfully.

We split solutions deterministically $90$--$10$ at the submission level. A problem may appear in both partitions, but a submission identifier cannot. The $90\%$ partition supplies SFT data and the $10\%$ partition validation loss; the structural target draws up to $32$ parseable accepted solutions from both. Problems with one usable solution remain available for SFT, while target collision requires at least two.
}

\long\def\codenetconfigappendix{%
\noindent {\bf Fine-tuning configuration.}
We use \texttt{google/gemma-2-2b-it}, \texttt{Qwen/Qwen3.5-2B}, \texttt{Qwen/Qwen3.5-4B}, and
\texttt{google/gemma-3-4b-it}. For fine-tuning-pool size $N$, the grid successively halves nested prefixes of a
deterministic submission order down to $n\ge8$. Each model runs for three epochs with the survey experiment's LoRA
rank, scaling, dropout, learning rate, cosine schedule, $4{,}000$-row validation subset, $12$ scheduled validation
evaluations, and effective batch size $16$. Fine-tuning and validation use shifted mean cross-entropy over code plus
the tokenizer's assistant terminator, with the prompt masked. We report the final adapter after three
epochs; validation loss monitors fine-tuning but does not select the reported checkpoint, and loss magnitudes are not
compared across tokenizers.

\noindent {\bf Decoding and parse conditioning.}
For each problem and checkpoint, we draw $32$ samples with temperature $1.0$, top-$p=1.0$, and at most
$1{,}024$ new tokens. The generation call does not set top-$k$ or \texttt{enable\_thinking}, so loaded model and
chat-template defaults supply them; model revisions and the effective generation configuration were not logged.
The loaded Qwen templates are asymmetric: \texttt{Qwen3.5-2B} defaults to non-thinking output, whereas
\texttt{Qwen3.5-4B} defaults to thinking output. Decoding is therefore not matched across models. Seeds derive
deterministically from seed $42$, problem identifier, fine-tuning size, and checkpoint. Outputs that fail AST parsing are
recorded but excluded from collision; a ratio is finite only when at least two outputs parse. Thus cross-model
diversity and parseability comparisons are confounded by decoding defaults, and all Zhang--Shasha diversity results
are conditional on parsing.
}

\long\def\codenetexamplesappendix{%
\noindent {\bf Two accepted-solution examples.}
The following pairs illustrate the primary metric using accepted Python submissions. Similarities are recomputed with exactly the normalized Zhang--Shasha kernel in Equation~\eqref{eq:codenetkernel}. Each table has four cells in the requested order: source code A, canonical AST A, source code B, and canonical AST B. The AST cells are complete automatic exports from the metric representation. They contain every retained node and every ordered, field-typed child edge; nothing is manually pruned. Source locations and \texttt{Load}/\texttt{Store} context nodes are absent because the metric itself excludes them.

\noindent {\bf High structural similarity: divisor rule (\texttt{p03125}).}
The problem gives positive integers $A$ and $B$. If $A$ divides $B$, the program prints $A+B$; otherwise, it prints $B-A$. Both accepted submissions use the same conditional implementation. Their normalized Zhang--Shasha similarity is $0.970588$ (edit distance $2$ across two $34$-node trees).

\begin{landscape}
\begin{table}[H]
\centering
\caption{\textbf{High-similarity accepted pair for problem \texttt{p03125}.} The four cells contain the complete source and complete automatically exported canonical AST for submissions \texttt{s613176273} and \texttt{s053364634}. Node classes in the AST are highlighted in purple; Python keywords, comments, and strings use distinct syntax colors.}
\label{tab:codenet-high-full-ast}
\setlength{\tabcolsep}{2pt}
\renewcommand{\arraystretch}{1}
\begin{tabular}{|>{\raggedright\arraybackslash}p{0.18\linewidth}|>{\raggedright\arraybackslash}p{0.30\linewidth}|>{\raggedright\arraybackslash}p{0.18\linewidth}|>{\raggedright\arraybackslash}p{0.30\linewidth}|}
\hline
\begin{minipage}[t]{\linewidth}
\textbf{Code A: \texttt{s613176273}}\par\smallskip
\lstinputlisting[style=paperpython]{examples/high_a.py}
\end{minipage}
&
\begin{minipage}[t]{\linewidth}
\textbf{Complete canonical AST A}\par\smallskip
\lstinputlisting[style=paperast]{examples/high_a.ast}
\end{minipage}
&
\begin{minipage}[t]{\linewidth}
\textbf{Code B: \texttt{s053364634}}\par\smallskip
\lstinputlisting[style=paperpython]{examples/high_b.py}
\end{minipage}
&
\begin{minipage}[t]{\linewidth}
\textbf{Complete canonical AST B}\par\smallskip
\lstinputlisting[style=paperast]{examples/high_b.ast}
\end{minipage}
\\
\hline
\end{tabular}
\end{table}
\newpage
\begin{table}[H]
\centering
\caption{\textbf{Low-similarity accepted pair for problem \texttt{p03135}.} The problem asks how much time passes in World A when a student studies for $T$ hours in World B, where time passes $X$ times as fast. Both accepted submissions compute $T/X$. Solution A converts a split input list in a loop and formats the result; solution B uses nested generator loops, helper functions, and type annotations. Their normalized Zhang--Shasha similarity is $0.407407$ (edit distance $64$ across trees with $44$ and $64$ nodes). The four cells contain the source (comments removed) and the complete automatically exported canonical AST for submissions \texttt{s227737564} and \texttt{s089174134}; the AST difference is not a manually prepared summary.}
\label{tab:codenet-low-full-ast}
\setlength{\tabcolsep}{2pt}
\renewcommand{\arraystretch}{1}
\begin{tabular}{|>{\raggedright\arraybackslash}p{0.18\linewidth}|>{\raggedright\arraybackslash}p{0.30\linewidth}|>{\raggedright\arraybackslash}p{0.18\linewidth}|>{\raggedright\arraybackslash}p{0.30\linewidth}|}
\hline
\begin{minipage}[t]{\linewidth}
\textbf{Code A: \texttt{s227737564}}\par\smallskip
\lstinputlisting[style=paperpython,basicstyle=\ttfamily\fontsize{4.5}{4.9}\selectfont]{examples/low_a.py}
\end{minipage}
&
\begin{minipage}[t]{\linewidth}
\textbf{Complete canonical AST A}\par\smallskip
\lstinputlisting[style=paperast,basicstyle=\ttfamily\fontsize{4.5}{4.9}\selectfont]{examples/low_a.ast}
\end{minipage}
&
\begin{minipage}[t]{\linewidth}
\textbf{Code B: \texttt{s089174134}}\par\smallskip
\lstinputlisting[style=paperpython,basicstyle=\ttfamily\fontsize{4.5}{4.9}\selectfont]{examples/low_b.py}
\end{minipage}
&
\begin{minipage}[t]{\linewidth}
\textbf{Complete canonical AST B}\par\smallskip
\lstinputlisting[style=paperast,basicstyle=\ttfamily\fontsize{4.5}{4.9}\selectfont]{examples/low_b.ast}
\end{minipage}
\\
\hline
\end{tabular}
\end{table}
\end{landscape}
}

\appendix
\section{Proofs and alignment conditions}
\label{app:proofs}

\subsection{Kernel-collision stability}

\begin{proof}[Proof of Theorem~\ref{thm:kernelstability}]
Kernel collision is an expectation of $k$ under product measures $P=p\otimes p$ and $Q=q\otimes q$. Because $k\in[0,1]$, the variational characterization of total variation gives
\[
|C_k(p)-C_k(q)|=|\E_P[k]-\E_Q[k]|\le\TV(P,Q).
\]
Pinsker's inequality \citep{pinsker1964} and the product identity $\mathrm{KL}(p\otimes p\|q\otimes q)=2\mathrm{KL}(p\|q)$ yield
\[
\TV(P,Q)\le\sqrt{\tfrac12\mathrm{KL}(P\|Q)}
=\sqrt{\mathrm{KL}(p\|q)}.
\]
Clipping collision to $[0,1]$ and inverting positive collision gives the remaining statements.
\end{proof}

For a random fitted model $q_{\hth}$, the fixed-model bound also controls the expected kernel-collision gap:
\begin{equation}
\left|\E_{\hth}[C_k(q_{\hth})]-C_k(p)\right|
\le \E_{\hth}\!\left[\sqrt{\mathrm{KL}(p\|q_{\hth})}\right]
\le \sqrt{\E_{\hth}[\mathrm{KL}(p\|q_{\hth})]}.
\label{eq:randomkernelbound}
\end{equation}
The first inequality uses the triangle inequality and Theorem~\ref{thm:kernelstability}; the second uses Jensen's inequality.

\subsection{A sufficient condition for nonnegative alignment}
\label{sec:dualregime}

Only the target--bias alignment in Eq.~\eqref{eq:ifbox} can be negative. The following condition on the fitted model is sufficient: tokens more likely under the target have no smaller bias. A concentrated initialization does not by itself imply this condition.

\begin{assumption}[Target--bias comonotonicity]\label{assmp:missingmass}
For every pair $a,b\in V$,
\begin{equation}
\big(p_h(a)-p_h(b)\big)\big(b_h(a)-b_h(b)\big)\ge0,
\qquad
b_h=\E[q_{\hth,h}]-p_h.
\end{equation}
\end{assumption}

Chebyshev's sum inequality then gives nonnegative covariance under the uniform measure on $V$. Since $\sum_a b_h(a)=0$,
\begin{equation}
\label{eq:missingmass}
p_h^\top b_h
=|V|\,\Cov_{a\sim\mathrm{Unif}(V)}\!\big(p_h(a),b_h(a)\big)
\ge0.
\end{equation}
All three terms inside the brackets in Eq.~\eqref{eq:ifbox} are therefore nonnegative, and
\begin{equation}
\label{eq:guaranteedhomog}
\E[C_h(q_{\hth})]-C_h(p)\ge0.
\end{equation}
For a uniform target, the alignment is exactly zero because $\sum_a b_h(a)=0$. In the synthetic experiments,
alignment after fitting is negative in the diffuse setting and positive only over an intermediate $N$ range in the
mode-aligned setting. Positive aggregate alignment does not by itself establish the pairwise comonotonicity
assumption, and neither observation shows that initialization determines the later sign.

\subsection{Relative collision bounds and population cross-entropy}

Dividing Theorem~\ref{thm:kernelstability} by exact target collision gives a relative, exact-collision bound:

\begin{equation}
\left|\,\frac{C(q_h)}{C(p_h)}-1\,\right|
\;=\;\frac{\big|C(q_h)-C(p_h)\big|}{C(p_h)}
\;\overset{\eqref{eq:cebound}}{\le}\;\frac{\sqrt{\mathrm{KL}(p_h\,\|\,q_h)}}{C(p_h)}
\;=\; N_2^{\mathrm{true}}(h)\,\sqrt{\mathrm{KL}(p_h\,\|\,q_h)}.
\label{eq:ceboundlf}
\end{equation}

\begin{proposition}[Relative bound on the collision ratio]\label{prop:tightbound}
Fix a context $h$ with true conditional $p_h$ and model conditional $q_h$, write $C(r)=\|r\|_2^2$, $N_2(r)=1/C(r)$, $N_2^{\mathrm{true}}(h)=N_2(p_h)$. The local collision ratio is $R=C(q_h)/C(p_h)$, which equals $1$ when the model's diversity is calibrated and exceeds $1$ when the model is more repetitive than the data. With $\delta=q_h-p_h$ (so $\sum_a\delta_a=0$),
\begin{equation}
\boxed{\;
\left|\,\frac{C(q_h)}{C(p_h)}-1\,\right| \;\le\; 2\sqrt{N_2^{\mathrm{true}}(h)}\;\|q_h-p_h\|_2 \;+\; N_2^{\mathrm{true}}(h)\,\|q_h-p_h\|_2^2 ,
\;}
\label{eq:tightbound}
\end{equation}
A looser bound follows using the Pearson divergence $\chi^2(q_h\,\|\,p_h)=\sum_a \delta_a^2/p_h(a)\ge\|\delta\|_2^2$ (finite when $q_h\ll p_h$):
\begin{equation}
\boxed{\;
\left|\,\frac{C(q_h)}{C(p_h)}-1\,\right| \;\le\; 2\sqrt{N_2^{\mathrm{true}}(h)\,\chi^2(q_h\,\|\,p_h)} \;+\; N_2^{\mathrm{true}}(h)\,\chi^2(q_h\,\|\,p_h).
\;}
\label{eq:tightboundchi2}
\end{equation}
\end{proposition}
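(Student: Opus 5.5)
The plan is to expand the collision of $q_h=p_h+\delta$ exactly and then bound the linear term with Cauchy--Schwarz. Writing $C(q_h)=\|p_h+\delta\|_2^2=\|p_h\|_2^2+2p_h^\top\delta+\|\delta\|_2^2$, we get
\[
C(q_h)-C(p_h)=2\,p_h^\top\delta+\|\delta\|_2^2 .
\]
This is the deterministic analogue of the decomposition in Eq.~\eqref{eq:ifbox}, with the variance term absent and $b_h$ replaced by $\delta$. Dividing by $C(p_h)=\|p_h\|_2^2$ and using the triangle inequality gives
\[
|R-1|\le \frac{2|p_h^\top\delta|}{\|p_h\|_2^2}+\frac{\|\delta\|_2^2}{\|p_h\|_2^2}.
\]
The second term is exactly $N_2^{\mathrm{true}}(h)\|\delta\|_2^2$. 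For the first, Cauchy--Schwarz gives $|p_h^\top\delta|\le\|p_h\|_2\|\delta\|_2$, so the first term is at most $2\|\delta\|_2/\|p_h\|_2=2\sqrt{N_2^{\mathrm{true}}(h)}\,\|\delta\|_2$, since $\|p_h\|_2=C(p_h)^{1/2}=N_2^{\mathrm{true}}(h)^{-1/2}$. This yields \eqref{eq:tightbound}.

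The constraint $\sum_a\delta_a=0$ is not needed for this inequality. If one wanted a sharper constant, one could replace $p_h$ by $p_h-\bar p\mathbf 1$ in the inner product. I would only note this, not use it.

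For \eqref{eq:tightboundchi2}, the plan is to show $\|\delta\|_2^2\le\chi^2(q_h\|p_h)$ and then substitute into \eqref{eq:tightbound}, using that the right-hand side is increasing in $\|\delta\|_2$. When $q_h\ll p_h$, we have $\delta_a=0$ wherever $p_h(a)=0$. On the remaining support $p_h(a)\le1$, so $\delta_a^2\le\delta_a^2/p_h(a)$ termwise. Summing gives $\|\delta\|_2^2\le\chi^2$ and hence $\|\delta\|_2\le\sqrt{\chi^2}$. Substituting gives $2\sqrt{N_2^{\mathrm{true}}\chi^2}+N_2^{\mathrm{true}}\chi^2$.

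I do not expect a genuine obstacle; the argument is elementary. The only points needing care are the absolute-continuity convention in the $\chi^2$ step, so that division by zero is avoided, and checking that $C(p_h)>0$, which holds for any probability vector.
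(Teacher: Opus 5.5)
Your proposal is correct and follows essentially the same route as the paper's proof: expand $C(q_h)-C(p_h)=2\langle p_h,\delta\rangle+\|\delta\|_2^2$, apply Cauchy--Schwarz and the triangle inequality, then use absolute continuity and $1/p_h(a)\ge 1$ on the support to get $\|\delta\|_2^2\le\chi^2(q_h\|p_h)$. You also spell out the monotonicity-in-$\|\delta\|_2$ step that the paper leaves implicit.
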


\begin{proof}
Since $q_h=p_h+\delta$, $C(q_h)-C(p_h)=2\langle p_h,\delta\rangle+\|\delta\|_2^2$, so $C(q_h)/C(p_h)-1=(2\langle p_h,\delta\rangle+\|\delta\|_2^2)/C(p_h)$. By Cauchy--Schwarz, $|\langle p_h,\delta\rangle|\le\|p_h\|_2\|\delta\|_2=\sqrt{C(p_h)}\|\delta\|_2$, so the cross term is at most $2\|\delta\|_2/\sqrt{C(p_h)}=2\sqrt{N_2^{\mathrm{true}}(h)}\|\delta\|_2$, while the quadratic term is $N_2^{\mathrm{true}}(h)\|\delta\|_2^2$; the triangle inequality gives~\eqref{eq:tightbound}. For~\eqref{eq:tightboundchi2}, absolute continuity confines all sums to the support of $p_h$, where $1/p_h(a)\ge1$ gives $\|\delta\|_2^2\le\chi^2(q_h\|p_h)$.
\end{proof}

The $\ell_2$ and $\chi^2$ forms exploit exact-collision structure. Theorem~\ref{thm:kernelstability} is more general and uses forward KL, which equals population cross-entropy above the target entropy:
\[
\mathrm{KL}(p_h\,\|\,q_h) = \E_{a\sim p_h}\!\left[-\log q_h(a)\right] - \E_{a\sim p_h}\!\left[-\log p_h(a)\right],
\]
where the first term is the population validation objective and the second is irreducible target entropy. An empirical validation loss estimates the first term only under the same completion objective and context weighting. In particular, the survey fine-tuning loss includes the end-of-sequence (EOS) token, whereas its reported diversity uses an option-conditioned answer-position distribution; aggregate survey validation loss is not a direct estimate of the KL governing that ratio.

Let $\mu$ be a specified probability distribution over token contexts, including its token-position weighting, and define $\overline{\mathrm{KL}}_\mu=\E_{h\sim\mu}[\mathrm{KL}(p_h\|q_h)]$. Cauchy--Schwarz applied to Eq.~\eqref{eq:ceboundlf} gives
\begin{equation}
\E_{h\sim\mu}\left|\,\frac{C(q_h)}{C(p_h)}-1\,\right|
\le
\sqrt{\E_{h\sim\mu}\!\big[N_2^{\mathrm{true}}(h)^2\big]\;\overline{\mathrm{KL}}_\mu}\,,
\label{eq:cebound_global}
\end{equation}
provided the displayed moments are finite. Thus average relative collision error is controlled by population excess cross-entropy under the same context measure.

\section{Synthetic alignment before and after fitting}
\label{app:initial-alignment}

\noindent {\bf Regime (b): a mode-aligned starting distribution.}
The second synthetic setting first pre-trains each random GPT by soft cross-entropy to a Zipf-weighted distribution over
the $32$ highest-probability tokens of the target's global Zipf component. Low entropy alone does not determine alignment:
Eq.~\eqref{eq:crosslambda} is positive only when
$p_h^\top\bar q_{0,h}>C(p_h)$. The construction is chosen to satisfy that initial inequality.

After SFT, the normalized alignment term $2p_h^\top b_h/C(p_h)$ is positive only for
$N=2.3$k--$21.5$k and peaks at $0.193$ near $N=4{,}300$
(Figure~\ref{fig:exp2decomp}(b)). Outside that interval its sign reverses. At $N=500$, the normalized variance
$\Tr\Cov(q_{\hth,h})/C(p_h)$ is approximately $3.9$, and the ensemble normalized collision gap
$G_h=\E[C_h(q_{\hth})]/C(p_h)-1$ is $3.53$. Thus the starting distribution affects the observed trajectory but does not fix the
sign after fine-tuning.

\noindent {\bf The initial sign is observable.}
Define $b_{0,h}=\bar q_{0,h}-p_h$. Table~\ref{tab:preft} evaluates the decomposition before fitting to target data, where it is
independent of sample size. Intervals are $95\%$ nonparametric bootstrap intervals from $2{,}000$ resamples of
the common evaluation contexts. The decomposition identity holds to machine precision
($|\text{derived}-\text{measured}|\le9\times10^{-16}$), but these initial terms do not identify their
counterparts after fitting.

\begin{table}[H]
\centering
\small
\begin{tabular}{lcc}
\toprule
Starting distribution $\bar q_0$ & Diffuse (regime~(a)) & Mode-aligned (regime~(b)) \\
\midrule
$|V|$ & $1{,}024$ & $50{,}000$ \\
\addlinespace
$\Tr\Cov(q_{\theta_0,h})/C(p_h)$ & $0.068$ & $1.2\times10^{-3}$ \\
 & \scriptsize$[0.067,0.069]$ & \scriptsize$[1.14,1.22]{\times}10^{-3}$ \\
\addlinespace
$\|b_{0,h}\|_2^2/C(p_h)$ & $0.826$ & $1.829$ \\
 & \scriptsize$[0.823,0.829]$ & \scriptsize$[1.720,1.941]$ \\
\addlinespace
$2p_h^\top b_{0,h}/C(p_h)$ & $-1.651$ & $\mathbf{+2.413}$ \\
 & \scriptsize$[-1.656,-1.645]$ & \scriptsize$[+2.345,+2.479]$ \\
\addlinespace
$\frac{\E[C(q_{\theta_0,h})]}{C(p_h)}-1$ & $-0.756$ & $+4.243$ \\
 & \scriptsize$[-0.760,-0.752]$ & \scriptsize$[4.068,4.419]$ \\
\addlinespace
$p_h^\top\bar q_{0,h}$ & $9.8\times10^{-4}$ & $\mathbf{5.0\times10^{-2}}$ \\
 & \scriptsize$[9.757,9.761]{\times}10^{-4}$ & \scriptsize$[4.93,5.11]{\times}10^{-2}$ \\
\addlinespace
$C(p_h)$ & $6.9\times10^{-3}$ & $2.4\times10^{-2}$ \\
 & \scriptsize$[6.7,7.2]{\times}10^{-3}$ & \scriptsize$[2.28,2.43]{\times}10^{-2}$ \\
\bottomrule
\end{tabular}
\caption{\textbf{Initialization realizes opposite alignment signs.} The four rows from variance through normalized
gap form the $C(p_h)$-normalized decomposition for each $100$-seed starting ensemble; the final two rows are unnormalized. Each quantity is averaged over contexts after any indicated normalization.
Brackets give $95\%$ bootstrap intervals over common evaluation contexts. The mode-aligned start satisfies
$p_h^\top\bar q_{0,h}>C(p_h)$, whereas the diffuse start does not. This is an initial-condition comparison, not a
claim that initialization fixes alignment after fitting.}
\label{tab:preft}
\end{table}

\section{Complete survey results}
\label{app:survey-results}

\begin{table}[H]
\centering
\caption{\textbf{Final-adapter survey medians.} Values are
$R_{\mathrm U}=C(q_{\mathrm{opt}})/\hat C_{\mathrm U}$ across evaluated question--group pairs, where $\hat C_{\mathrm U}$ is the distinct-pair U-statistic.
``Base'' is the zero-shot checkpoint; the other columns summarize final adapters over the SFT data-size grid. At the largest pool,
all medians round to $0.99$--$1.02$.}
\label{tab:exp4fullmatrix}
\smallskip
\footnotesize
\begin{tabular}{@{}l l r r r@{}}
\toprule
Survey & Model & Base ($n=0$) & Max over grid (final) & Largest $n$ (final) \\
\midrule
GSS  & \texttt{gemma-2-2b-it} & $2.11$ & $1.95$ & $1.00$ \\
GSS  & \texttt{Qwen3.5-2B} & $0.99$ & $2.22$ & $0.99$ \\
GSS  & \texttt{Qwen3.5-4B} & $1.61$ & $1.18$ & $1.00$ \\
GSS  & \texttt{gemma-3-4b-it} & $2.28$ & $2.19$ & $1.01$ \\
\midrule
WVS  & \texttt{gemma-2-2b-it} & $2.42$ & $1.92$ & $1.00$ \\
WVS  & \texttt{Qwen3.5-2B} & $1.04$ & $1.39$ & $1.02$ \\
WVS  & \texttt{Qwen3.5-4B} & $2.02$ & $1.76$ & $1.01$ \\
WVS  & \texttt{gemma-3-4b-it} & $2.92$ & $2.72$ & $1.01$ \\
\midrule
ANES & \texttt{gemma-2-2b-it} & $2.05$ & $1.95$ & $0.99$ \\
ANES & \texttt{Qwen3.5-2B} & $1.10$ & $1.34$ & $1.01$ \\
ANES & \texttt{Qwen3.5-4B} & $1.77$ & $1.97$ & $1.01$ \\
ANES & \texttt{gemma-3-4b-it} & $2.53$ & $2.43$ & $1.01$ \\
\bottomrule
\end{tabular}
\end{table}

Among base checkpoints, \texttt{Qwen3.5-2B} is closest to median calibration
($R_{\mathrm U}=0.99$--$1.10$), while the other three are under-dispersed on every survey
($1.61$--$2.92$). These medians do not establish pairwise distributional fit or reveal the target--bias alignment
term, which would require replicated fine-tuning runs.

\section{Mitigation methods and finite-sample estimation}

\noindent {\bf Mitigation via decoding and fitting objectives.} Several methods improve text quality and diversity in maximum-likelihood models by modifying decoding or fitting. Decoding methods include Top-$k$ and nucleus sampling \citep{holtzman2019} and truncation sampling \citep{finlayson2024}; the unlikelihood objective instead changes the fitting criterion \citep{welleck2019}. Methods specific to SFT include GEM, using entropy-regularized distribution matching \citep{li2025}; SED-SFT, selectively regularizing entropy during exploration \citep{chen2026}; and TOFU, employing tempered focal loss to address the neglect of low-frequency patterns \citep{klypa2026}. Finally, fine-tuning objectives enhance diversity in creative writing \citep{chung2025}.  Further approaches preserve diversity during preference optimization by separating the entropy and reference-cross-entropy components of the standard KL penalty \citep{slocum2025}, or by preserving distributions or mixtures of preferences instead of compressing feedback to a single scalar value \citep{siththaranjan2024,chakraborty2024}. These methods provide benchmarks for our work, which contributes a finite-sample law quantifying the portion of observed diversity gaps attributable to estimation error.

\noindent {\bf Finite-sample estimation of distributional functionals.} Plug-in entropy estimators are susceptible to downward bias in finite samples, especially when the alphabet size is large relative to the sample size (Miller--Madow, Basharin; see \citealp{paninski2003}). The inverse-Simpson number, $N_2 = \exp H_2$, represents the order-2 Hill number.  Prior work characterizes minimax estimation of discrete functionals, including entropy and power sums \citep{jiao2015}, estimates entropy for linguistic distributions \citep{arora2022}, and investigates optimal prediction of unseen species, a task directly relevant to identifying valid, unobserved continuations \citep{orlitsky2016}. Finally, diversity diagnostics have been developed, including collision-based birthday-paradox tests for low effective support in generative adversarial networks (GANs) \citep{arora2017,arora2018} and the Vendi score, which measures similarity-aware diversity beyond exact labels via the Shannon entropy of a similarity matrix's eigenvalues \citep{friedman2022}. These results are adapted to the context of autoregressive LLM fine-tuning with shared neural parameters, variable prompts and context lengths, and positive-only supervision, to interpret findings in terms of scaling laws and prior-relative lower bounds.

\section{Sequence-level bounds and response length}
\label{sec:lengthscaling}

Theorem~\ref{thm:kernelstability} applies directly to the countable response space $V^*$. For a bounded sequence
kernel $k:V^*\times V^*\to[0,1]$, define
$C_{p,k}(x)=\E_{Y,Y'\overset{\mathrm{iid}}{\sim}p(\cdot\mid x)}[k(Y,Y')]$ and define
$C_{q,k}(x)$ analogously. Then
\begin{equation}
\label{eq:ceboundseq}
\big|\,C_{p,k}(x)-C_{q,k}(x)\,\big|
\;\le\;\sqrt{\mathrm{KL}\big(p(\cdot\mid x)\,\big\|\,q(\cdot\mid x)\big)}.
\end{equation}

\noindent {\bf Length enters through total excess cross-entropy.}
If $p$ and $q$ factor over the same tokenization and termination rule, the KL chain rule gives
\begin{equation}
\label{eq:klchain}
\mathrm{KL}\big(p(\cdot\mid x)\,\big\|\,q(\cdot\mid x)\big)
\;=\;\E_{Y\sim p(\cdot\mid x)}\!\left[\;\sum_{t=1}^{T(Y)+1}\mathrm{KL}\big(p_{h_t}\,\big\|\,q_{h_t}\big)\right],
\qquad h_t=(x,Y_{<t}),
\end{equation}
where $T(Y)+1$ includes EOS. Let $L(x)=\E_{Y\sim p}[T(Y)+1]$ and let
$\overline{\mathrm{KL}}_{\mathrm{tok}}(x)$ be the right-hand side divided by $L(x)$. Equation~\eqref{eq:ceboundseq}
becomes
\begin{equation}
\label{eq:ceboundlength}
\big|\,C_{p,k}(x)-C_{q,k}(x)\,\big|
\;\le\;\sqrt{L(x)\,\overline{\mathrm{KL}}_{\mathrm{tok}}(x)}\,.
\end{equation}

At fixed mean per-token KL, this upper bound grows as $\sqrt{L(x)}$. It improves on the trivial maximum gap of $1$ only when total sequence KL is below one nat. The survey statistic is a separate one-step, option-conditioned categorical comparison, whereas its
fine-tuning and validation losses cover the answer encoding and EOS over the full vocabulary; those aggregate losses
are not the KL in a bound for $q_{\mathrm{opt}}$. CodeNet responses can be hundreds of tokens long, and the paper
does not estimate their sequence-level KL, so Eq.~\eqref{eq:ceboundlength} is not quantitatively informative for the
reported Zhang--Shasha collision ratios.

\noindent {\bf The relative sequence bound is usually vacuous.}
Dividing the absolute exact-match bound by target collision incurs the factor $1/C_p(x)$, the target's effective
number of complete responses. For open-ended generation, $C_p(x)$ can be extremely small, so $1/C_p(x)$ is large
and exact collisions are difficult to estimate at feasible sample sizes. A bounded structural kernel can yield a
larger, more estimable $C_{p,k}(x)$, but the usefulness of its relative bound still depends on the target kernel
collision and sequence-level KL.

\section{KL-regularized preference optimization}
\label{sec:rl}

The accounting framework also distinguishes finite-sample error from a preference objective's population optimum.
For KL-regularized reward optimization, that optimum is the Gibbs tilt
$\pi^\star(y\mid x)\propto\pi_{\mathrm{ref}}(y\mid x)\exp\!\big(r(x,y)/\beta\big)$, where $r$ is the reward,
$\pi_{\mathrm{ref}}$ is the reference policy, and $\beta>0$ is the KL strength. Under its preference-model
assumptions, DPO parameterizes the same implicit optimum \citep{rafailov2023}. Whether that optimum is more
concentrated than a human-response target depends on the reward, reference, and target; we do not establish such
concentration here.
\begin{equation}
\label{eq:rlcollision}
C(\pi^\star)=\frac{\sum_y\pi_{\mathrm{ref}}(y\mid x)^2\exp\!\big(2r(x,y)/\beta\big)}{\Big[\sum_y\pi_{\mathrm{ref}}(y\mid x)\exp\!\big(r(x,y)/\beta\big)\Big]^2},
\end{equation}
For a random fitted policy $\hat\pi$, add and subtract the collision of $\pi^\star$:
\begin{equation}
\label{eq:rldecomp}
\boxed{\;
\E[C(\hat\pi)]-C(p)=
\underbrace{\big(\E[C(\hat\pi)]-C(\pi^\star)\big)}_{\text{estimation, approximation, and optimization}}
+\underbrace{\big(C(\pi^\star)-C(p)\big)}_{\text{objective-induced}}.
\;}
\end{equation}
The first term can vanish with growing data only under suitable consistency, capacity, and optimization conditions.
The second can remain even then because $\pi^\star$ need not equal $p$. Equation~\eqref{eq:rldecomp} is an
accounting identity, not an empirical result of this paper; residual error may reflect either term.

\section{Empirical check of the bound}
\label{sec:bound-check}
Proposition~\ref{prop:tightbound} provides a relative bound for fixed fitted-model conditionals. Across
$150{,}000$ held-in contexts from $100$ GPTs---one per pre-training sample size from $32$k to $1$M for a fixed
$6.14$M-parameter model---the measured $|C(q_h)/C(p_h)-1|$ never exceeds the $\ell_2$ bound. The median
measured-to-bound ratio is $0.140$ and the maximum is $0.980$: the bound is typically loose and occasionally
near-tight. The $\chi^2$ and forward-KL forms also hold everywhere but are looser. The minimum measured KL is
$0.0353>0$.

\begin{figure}[H]
\centering
\includegraphics[width=0.72\textwidth]{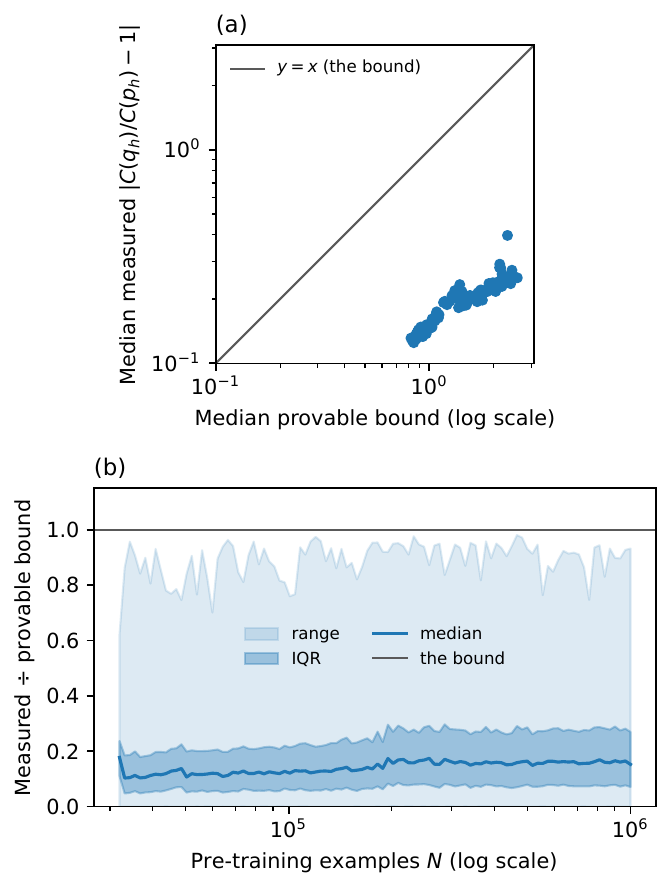}
\caption{\textbf{The $\ell_2$ bound holds everywhere and is occasionally near-tight.} Across $150{,}000$
held-in contexts, the median measured-to-bound ratio is $0.140$ and the maximum is $0.980$. (a) Median measured
error against median bound for each $N$; all points lie below $y=x$. (b) Median, interquartile band, and full range
of the ratio across $1{,}500$ contexts per $N$.}
\label{fig:exp2bound}
\end{figure}

Complete per-question performance grids for all four models are available in the code repository accompanying this submission. These grids detail performance on every measurable item.

\section{Experimental details}
\label{app:data-prompts}

\subsection{Synthetic languages}
\label{app:synthetic-data}
\exptwoappendix
\exptwoconfigappendix
\expthreeappendix

\subsection{Surveys}
\label{app:survey-data-prompts}
\surveydataandpromptsappendix
\surveyconfigsappendix

\subsection{CodeNet}
\label{app:codenet-data-prompts}
\codenetdatasetappendix
\codenetconfigappendix

\noindent {\bf Comparison on the same problems.}
For each model, we retain problems with at least two parseable outputs both before fine-tuning and after fine-tuning on the full dataset.
In Figure~\ref{fig:codenet-full} panel order, these sets contain $2{,}646$, $2{,}569$, $986$, and $2{,}646$ problems.
Their median collision ratios change from $1.035$ to $0.971$, $0.740$ to $0.915$, $0.844$ to $1.003$, and $1.637$ to $0.974$, respectively.
Every model's median therefore moves closer to $1$ when evaluated on the same problems at both stages.
Both estimates remain conditional on parsing.

\noindent {\bf Code-generation prompt.}
The problem description is inserted after the following instruction in one user turn; each model's chat template supplies its own control tokens.
\begin{quote}
\small\ttfamily
You are an expert competitive programmer. Return only a complete Python 3 program that solves the problem. Do not use Markdown fences or explanations.\\[4pt]
\{problem description\}
\end{quote}

\codenetexamplesappendix
\clearpage

\section{Tabular intuition for finite-sample collision}
\label{sec:result2}

The classical per-context histogram provides a deliberately restricted comparison to the neural decomposition in
Section~\ref{sec:neural}. Fix a context $h$ with $n_h$ independent and identically distributed (i.i.d.) samples from $p_h$ and empirical maximum-likelihood estimate (MLE)
$\hat p_h(a)=n_{h,a}/n_h$. Then $\E[\hat p_h(a)]=p_h(a)$: finite samples favor no token in expectation,
although any realized dataset overweights some tokens.

\begin{proof}
Since $n_{h,a}\sim\mathrm{Binomial}(n_h,p_h(a))$, $\E[\hat p_h(a)]=\E[n_{h,a}]/n_h=p_h(a)$.
\end{proof}

Conditioning on one realized histogram makes its sampling fluctuations shared across all subsequent generations.

\noindent {\bf Simple example.} For $p=(1/3,1/3,1/3)$ and $n=6$, realized counts might be $(4,1,1)$ or
$(1,4,1)$. The favored token changes across datasets, but both histograms have higher collision than $p$.

\subsection{Expected collision inflation}
For arbitrary $p_h$, $\E[\hat p_h(a)^2] = p_h(a)^2 + \frac{p_h(a)(1-p_h(a))}{n_h}$. Summing over $a$,
\begin{equation}
\E[C(\hat p_h)] = C(p_h) + \frac{1-C(p_h)}{n_h},
\end{equation}
so the local expected collision ratio is
\begin{equation}
\frac{\E[C(\hat p_h)]}{C(p_h)} = 1 + \frac{1-C(p_h)}{n_h\,C(p_h)} = 1 + \frac{N_2(p_h)-1}{n_h}.
\end{equation}
This exact identity is the local scaling law for the tabular MLE:
\begin{equation}
\text{local expected collision ratio} = 1 + \frac{\text{target effective diversity } N_2(p_h) - 1}{\text{sample count } n_h}.
\end{equation}
The expected collision ratio is also exactly
$\E[N_2(p_h)/N_2(\hat p_h)]$, because $N_2(p_h)/N_2(\hat p_h)=C(\hat p_h)/C(p_h)$ for every
dataset. It generally differs from $N_2(p_h)/\E[N_2(\hat p_h)]$.

\begin{proof}[Derivation]
Fix the context $h$ and abbreviate $p(a)=p_h(a)$, $n=n_h$. The observed counts $(n_{h,a})_a$ are $\mathrm{Multinomial}(n,p_h)$; in particular each $n_{h,a}\sim\mathrm{Binomial}(n,p(a))$, so the empirical MLE $\hat p_h(a)=n_{h,a}/n$ has
\begin{equation}
\E[\hat p_h(a)]=p(a),
\qquad
\Var(\hat p_h(a))=\frac{p(a)\big(1-p(a)\big)}{n}.
\end{equation}
By the variance--mean-square identity $\E[X^2]=\Var(X)+(\E X)^2$,
\begin{equation}
\E[\hat p_h(a)^2]=\frac{p(a)\big(1-p(a)\big)}{n}+p(a)^2 .
\end{equation}
Summing over $a$ and using linearity of expectation (which needs no independence across tokens, so the within-multinomial correlations are irrelevant),
\begin{equation}
\E[C(\hat p_h)]=\sum_a \E[\hat p_h(a)^2]
=\sum_a p(a)^2+\frac1n\sum_a p(a)\big(1-p(a)\big)
=C(p_h)+\frac{1-C(p_h)}{n},
\end{equation}
where the last equality uses $\sum_a p(a)=1$, hence
$\sum_a p(a)(1-p(a))=1-\sum_a p(a)^2=1-C(p_h)$. Dividing by $C(p_h)$ and substituting
$N_2(p_h)=1/C(p_h)$,
\begin{equation}
\frac{\E[C(\hat p_h)]}{C(p_h)}
=1+\frac{1-C(p_h)}{n\,C(p_h)}
=1+\frac{N_2(p_h)-1}{n},
\end{equation}
because $(1-C(p_h))/C(p_h)=N_2(p_h)-1$. Restoring $n=n_h$ gives the stated law.
\end{proof}

The critical factor is not the overall size of the SFT dataset, but rather the local effective sample size, $n_h$, representing the number of fine-tuning examples that contribute meaningfully to context $h$.

\begin{example}[Why any finite sample inflates collision (Jensen)]\label{ex:jensen}
Collision $C(r)=\sum_a r(a)^2$ is convex. Since $\E[\hat p_h]=p_h$, Jensen's inequality gives
$\E[C(\hat p_h)]\ge C(p_h)$. The excess $(1-C(p_h))/n_h$ is the summed multinomial variance and vanishes
as $n_h\to\infty$.
\end{example}

\noindent {\bf Homogeneous-tree illustration.} If every context has a uniform $K=100$ continuation distribution,
each context has $n_h=1000$ independent observations, and continuation collision is identical across sibling
branches, the local factor is $1+99/1000=1.099$. Under these restrictive assumptions, $50$ decisions yield the
product $1.099^{50}\approx112$. This is an illustration, not a neural or generic sequence law.

\begin{figure}[H]
\centering
\begin{tikzpicture}
\begin{axis}[width=0.72\textwidth,height=5.6cm,
  xlabel={local effective sample size $n_h$}, ylabel={expected collision ratio $\E[R]$},
  xmin=10,xmax=2000,ymode=log,ymin=1,ymax=12,domain=10:2000,samples=200,
  legend pos=north east,grid=both,every axis plot/.append style={very thick}]
\addplot[blue]{1+(10-1)/x};   \addlegendentry{$N_2=10$}
\addplot[red]{1+(50-1)/x};    \addlegendentry{$N_2=50$}
\addplot[teal]{1+(100-1)/x};  \addlegendentry{$N_2=100$}
\end{axis}
\end{tikzpicture}
\caption{\textbf{Tabular collision inflation decays with local coverage.} The exact expected collision ratio
$1+(N_2-1)/n_h$ for $N_2=10,50,100$.}
\label{fig:inflation}
\end{figure}

\begin{figure}[H]
\centering
\begin{tikzpicture}
\begin{axis}[width=0.72\textwidth,height=5.6cm,
  xlabel={sequence length $L$ (number of nontrivial decisions)}, ylabel={expected sequence collision ratio},
  ymode=log,xmin=0,xmax=60,domain=0:60,samples=61,
  legend pos=north west,grid=both,every axis plot/.append style={very thick}]
\addplot[teal]{(1+99/2000)^x};  \addlegendentry{$n_h=2000$}
\addplot[red]{(1+99/1000)^x};   \addlegendentry{$n_h=1000$}
\addplot[blue]{(1+99/500)^x};   \addlegendentry{$n_h=500$}
\end{axis}
\end{tikzpicture}
\caption{\textbf{Homogeneous-tree illustration.} With independent tabular estimates, $N_2=100$, and identical
continuation collision at every branch, the per-step factor compounds multiplicatively. The generic recursion need
not factor this way.}
\label{fig:compound}
\end{figure}

\begin{remark}[This is tabular, not neural]
\label{rem:tabular}
The identity above is exact for a per-context empirical MLE. A transformer shares parameters across contexts, so its
conditional need not have multinomial variance and no effective-count substitution restores this formula in
general. The neural experiments therefore use Eq.~\eqref{eq:ifbox} and Theorem~\ref{thm:kernelstability} instead.
\end{remark}

\subsection{Amplification over sequences}
For variable-length autoregressive models, sequence collision is not a simple product of probabilities over fixed positions, due to the dependence of contexts on sampled prefixes.  Analysis of these models therefore requires a branching recursive framework.
\begin{equation}
C_p(h) = \sum_a p(a\mid h)^2\, C_p(ha),
\label{eq:seqcollisionrec}
\end{equation}
Each summand is the probability that two draws both choose branch $a$ and then collide from $ha$. Thus the
collision-biased next-token weight is proportional to $p(a\mid h)^2C_p(ha)$, not to $p(a\mid h)^2$ alone; the
recursion is generally a sum of branch products rather than a product of aggregate one-step and continuation
collisions.

Let $D$ denote the random tabular dataset and let
$R_p(y\mid x)=p(y\mid x)^2/C_p(x)$ be the exact-match collision-biased path distribution. For small local terms
and approximately independent per-context estimates, a first-order expansion motivates the homogeneous-tree
heuristic
\begin{equation}
\log\frac{\E_D[C_{\hat p}(x)]}{C_p(x)} \approx \E_{Y\sim R_p(\cdot\mid x)}\!\left[\sum_{t=1}^{T(Y)+1} \frac{N_2(p_{h_t})-1}{n_{h_t}}\right],
\end{equation}
This approximation additionally treats $R_p$ as fixed and requires $C_p(ha)$ to be approximately constant across
sibling branches. Without those conditions, the $R_p$-average of local factors is not the expected sequence
collision ratio.

\subsection{Entropy analogue}
\label{sec:entropy}
For a fixed finite support of size $S$, the classical plug-in bias is
$\E[H(\hat p_h)]=H(p_h)-(S-1)/(2n_h)+o(n_h^{-1})$ under its regularity conditions
\citep{paninski2003,jiao2015}. Replacing $S$ by $\exp H(p_h)$ or summing this local approximation along generated
paths is heuristic; neither substitution is used in the empirical claims.

\section{Limits of temperature scaling}
\label{sec:temperature}
Temperature can change a model's collision value without generally calibrating its distribution. To make the global
sequence transformation well-defined for every $T>0$, fix a distribution $q(\cdot\mid x)$ with finite support and
write $\beta=1/T$, $q_T(y\mid x)=q(y\mid x)^\beta/Z_\beta(x)$, and
$Z_\beta(x)=\sum_yq(y\mid x)^\beta$.

\begin{proposition}[Global temperature monotonically increases order-2 diversity]
\begin{equation}
C_T(x) = \sum_y q_T(y\mid x)^2 = \frac{Z_{2\beta}(x)}{Z_\beta(x)^2} = \frac{Z_{2/T}(x)}{Z_{1/T}(x)^2}.
\end{equation}
Moreover, $C_T(x)$ is nonincreasing and $N_2^T(x)=1/C_T(x)$ is nondecreasing in $T$.
\end{proposition}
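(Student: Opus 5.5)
The plan is to handle the closed form by direct substitution and then prove monotonicity by differentiating $\log C_T$ in the inverse temperature $\beta=1/T$. Throughout, I restrict all sums to the finite support of $q(\cdot\mid x)$, where $q(y\mid x)>0$. Each $Z_\beta(x)$ is then a finite sum of terms $\exp(\beta\log q(y\mid x))$. Hence it is positive and smooth in $\beta$ on $(0,\infty)$.

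\textbf{Closed form.} First I substitute $q_T(y\mid x)=q(y\mid x)^\beta/Z_\beta(x)$ into $\sum_y q_T(y\mid x)^2$. Squaring gives $q(y\mid x)^{2\beta}/Z_\beta(x)^2$, and summing the numerators yields $Z_{2\beta}(x)$. This gives $C_T(x)=Z_{2\beta}(x)/Z_\beta(x)^2$. Setting $\beta=1/T$ gives the last expression in the display.

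\textbf{Monotonicity.} The map $T\mapsto\beta=1/T$ is decreasing. So it suffices to show that $\phi(\beta):=\log C_T(x)=\log Z_{2\beta}(x)-2\log Z_\beta(x)$ is nondecreasing in $\beta$. Write $m(\beta):=\E_{Y\sim q_{1/\beta}}[\log q(Y\mid x)]$ for the mean log-probability under the tempered distribution. The standard log-partition identity is $\frac{d}{d\beta}\log Z_\beta=\sum_y q^\beta\log q/Z_\beta=m(\beta)$. Applying it at $2\beta$ with the chain rule gives
\begin{equation*}
\phi'(\beta)=2\,m(2\beta)-2\,m(\beta).
\end{equation*}

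\textbf{Key step.} Next I show that $m$ is nondecreasing. Differentiating once more, $m'(\beta)$ is the second derivative of $\log Z_\beta$, which equals $\Var_{Y\sim q_{1/\beta}}[\log q(Y\mid x)]\ge 0$. This is the usual exponential-family fact that the log-partition function is convex in the natural parameter, here with sufficient statistic $\log q$. Since $2\beta\ge\beta$, it follows that $m(2\beta)\ge m(\beta)$. Hence $\phi'\ge0$, so $C_T$ is nonincreasing in $T$. Because $N_2^T(x)=1/C_T(x)$ with $C_T(x)>0$, $N_2^T$ is nondecreasing in $T$.

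\textbf{Main obstacle.} The step that needs care is the sign argument for $\phi'$. It compares the mean of $\log q$ at two different tilts, so it needs monotonicity of $m$, not merely convexity at a single point. The variance identity supplies this monotonicity. The finite-support assumption is what licenses differentiating under the sum and guarantees that no $\log 0$ terms appear.
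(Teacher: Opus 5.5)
Your proposal is correct and follows essentially the same route as the paper. Both get the closed form by direct substitution, then write $\log C_T=\psi(2\beta)-2\psi(\beta)$ with $\psi(\beta)=\log Z_\beta$, and use $\psi''=\Var(\log q)\ge 0$ to conclude $\psi'(2\beta)\ge\psi'(\beta)$, so $C_T$ is nondecreasing in $\beta$ and hence nonincreasing in $T$.
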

\begin{proof}
With $q_T(y)=q(y)^\beta/Z_\beta$ and $Z_\beta=\sum_y q(y)^\beta$, $C_T=\sum_y q_T(y)^2=\sum_y q(y)^{2\beta}/Z_\beta^2=Z_{2\beta}/Z_\beta^2$. For monotonicity in $T$, let $\psi(\beta)=\log Z_\beta$. Then $\psi'(\beta)=\E_{q_\beta}[\log q]$ where $q_\beta\propto q^\beta$, and $\psi''(\beta)=\Var_{q_\beta}(\log q)\ge0$, so $\psi'$ is nondecreasing. Since $\log C_T=\psi(2\beta)-2\psi(\beta)$, $\frac{d}{d\beta}\log C_T=2\big[\psi'(2\beta)-\psi'(\beta)\big]\ge0$ for $\beta>0$. Thus $C_T$ is nondecreasing in $\beta=1/T$, i.e.\ nonincreasing in $T$, with equality only in the degenerate cases where $\log q$ is $q_\beta$-a.s.\ constant (uniform or point mass); equivalently the order-2 diversity $N_2^{T}(x)=1/C_T(x)$ is nondecreasing in $T$.
\end{proof}
The reachable values depend only on the fixed probability profile of $q$.

\subsection{The temperature-reachable diversity interval}
Raising global sequence temperature increases diversity, but only within a fixed interval determined by the model distribution. For a given prompt $x$, let $S(x)$ denote the number of sequences with non-zero probability under $q$ given $x$ ($S(x):=|\mathrm{supp}\,q(\cdot\mid x)|$), and let $g_\star(x)$ denote the number of modes of $q$ given $x$ ($g_\star(x):=|\arg\max_y q(y\mid x)|$).

\begin{lemma}[Temperature-reachable diversity interval]
For a fixed model distribution $q(\cdot\mid x)$ with finite support, $N_2^{T}(x)=Z_\beta(x)^2/Z_{2\beta}(x)$ is continuous and nondecreasing in $T$, with limits
\begin{equation}
\lim_{T\to 0^+} N_2^{T}(x)=g_\star(x),
\qquad
\lim_{T\to\infty} N_2^{T}(x)=S(x).
\end{equation}
Hence as $T$ ranges over $(0,\infty)$, $N_2^{T}(x)$ takes every value in the open interval $(\,g_\star(x),\,S(x)\,)$; the endpoints $g_\star(x)$ and $S(x)$ are the limits as $T\to0^+$ and $T\to\infty$ and are not attained at any finite $T$ except in degenerate cases ($q$ already a point mass, resp.\ already uniform on its support). Its closure is $[\,g_\star(x),\,S(x)\,]$ (generically $[1,S(x)]$). One degenerate case also affects the lower endpoint: if $q(\cdot\mid x)$ is uniform on its entire support, then every token is a maximizer, so $g_\star(x)=S(x)$ and $N_2^{T}(x)=S(x)$ at every finite $T$; the reachable interval then collapses to the single point $\{S(x)\}$. The ``point mass / uniform limit'' description above is thus the generic non-degenerate picture, but is incomplete for this collapsed (uniform-$q$) case.
\end{lemma}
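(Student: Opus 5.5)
The plan is to work with the log-partition function $\psi(\beta)=\log Z_\beta(x)$, $\beta=1/T$, and write
\[
\log N_2^T(x)=2\psi(\beta)-\psi(2\beta).
\]
Since the support is finite, $Z_\beta$ is a finite sum of terms $q(y\mid x)^\beta$. Each term is smooth and positive in $\beta$. So $\psi$ is smooth on $(0,\infty)$, and $N_2^T$ is continuous in $T$ as a composition with $T\mapsto1/T$.

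Monotonicity comes straight from the preceding proposition. That proposition shows $C_T$ is nonincreasing in $T$, so $N_2^T=1/C_T$ is nondecreasing. No new argument is needed here.

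For the limits, I will compute directly. Let $q_{\max}=\max_y q(y\mid x)$. As $\beta\to\infty$, divide numerator and denominator by powers of $q_{\max}$:
\[
N_2^T=\frac{\bigl(\sum_y (q(y)/q_{\max})^{\beta}\bigr)^2}{\sum_y (q(y)/q_{\max})^{2\beta}}.
\]
Every ratio $q(y)/q_{\max}$ is at most $1$. It equals $1$ exactly on the $g_\star$ maximizers, and the non-maximizer terms vanish. So the expression tends to $g_\star^2/g_\star=g_\star$, which gives the $T\to0^+$ limit. As $\beta\to0^+$, every support term $q(y)^\beta\to1$, so $N_2^T\to S^2/S=S$, which gives the $T\to\infty$ limit.

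The interval claim then follows from the intermediate value theorem. A continuous function on $(0,\infty)$ with these limits takes every value strictly between them. Its closure is $[g_\star,S]$.

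The step I expect to need the most care is the attainment of the endpoints. I will show that if $q$ is neither uniform on its support nor a point mass, then $N_2^T$ is strictly increasing. The proof of the preceding proposition gives
\[
\frac{d}{d\beta}\log C_T=2\bigl[\psi'(2\beta)-\psi'(\beta)\bigr],
\]
with $\psi''(\beta)=\Var_{q_\beta}(\log q)$. When $q$ is non-uniform on its support, $\log q$ is nonconstant under the full-support tilt $q_\beta$. So $\psi''>0$ everywhere, $\psi'$ is strictly increasing, and $N_2^T$ is strictly increasing. A strictly increasing function strictly between its limits never attains them, so the open interval is exactly the range.

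The degenerate cases are checked directly. A point mass has $g_\star=S=1$. A uniform $q$ has $q_T=q$ for every $T$, so $N_2^T\equiv S$ and $g_\star=S$, and the range collapses to $\{S\}$ as the statement says. The word ``generically'' in $[1,S]$ refers to a unique mode, $g_\star=1$; I will only note this and not prove it.
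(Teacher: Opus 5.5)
Your proposal is correct and follows essentially the same route as the paper: monotonicity inherited from the preceding proposition, the same $q_{\max}$-normalized computation of the two limits, and continuity plus the intermediate value theorem. Your one addition is the strict-monotonicity argument: $\psi''(\beta)=\Var_{q_\beta}(\log q)>0$ whenever $q$ is not uniform on its support, so the endpoints are never attained, which makes rigorous a step the paper's proof only asserts.
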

\begin{proof}
Monotonicity is the previous result ($C_T$ nonincreasing in $T$, so $N_2^T=1/C_T$ nondecreasing). For $T\to0^+$ ($\beta\to\infty$): writing $q_{\max}=\max_y q(y)$ and $g_\star$ for the number of maximizers, $Z_\beta=g_\star q_{\max}^\beta(1+o(1))$ and $Z_{2\beta}=g_\star q_{\max}^{2\beta}(1+o(1))$, so $N_2^T=Z_\beta^2/Z_{2\beta}\to g_\star^2 q_{\max}^{2\beta}/(g_\star q_{\max}^{2\beta})=g_\star$. For $T\to\infty$ ($\beta\to0$): $q(y)^\beta\to\mathbf 1\{q(y)>0\}$, so $Z_\beta\to S$ and $Z_{2\beta}\to S$, giving $N_2^T\to S^2/S=S$. Continuity in $T$ and the intermediate value theorem give every value in the open interval; the endpoints are limits, attained only in the stated degenerate cases.
\end{proof}

When $S(x)>g_\star(x)$, the fraction of this interval reached at $T$ is
\begin{equation}
\rho_T(x):=\frac{N_2^{T}(x)-g_\star(x)}{S(x)-g_\star(x)}\in[0,1],
\end{equation}
which is a normalized diversity location, not a temperature parameter. It is undefined when the interval collapses.

\begin{theorem}[Temperature only redistributes existing support]
\label{thm:tempsupport}
Assume a finite candidate space (e.g.\ a finite vocabulary together with a maximum sequence length), so that $S(x)=|\mathrm{supp}\,q(\cdot\mid x)|<\infty$. For every temperature $T>0$ the temperature-scaled model $q_T$ has the same support as $q$, and its diversity is bounded by that support size:
\begin{equation}
N_2^{T}(x)\le S(x),
\qquad
\sup_{T>0}N_2^{T}(x)=S(x),
\end{equation}
the supremum corresponding to the uniform distribution on $\mathrm{supp}\,q(\cdot\mid x)$ and approached only as $T\to\infty$ (not attained at any finite $T$ unless $q$ is already uniform on its support). Consequently, relative to the true conditional $p(\cdot\mid x)$:
\begin{enumerate}[label=(\roman*)]
\item if $N_2\big(p(\cdot\mid x)\big)>S(x)$, no temperature reaches the true diversity---the model's support is simply too small;
\item if $N_2\big(p(\cdot\mid x)\big)$ lies in the open interval $(g_\star(x),S(x))$, some finite temperature attains the true diversity value by the intermediate value theorem; the boundary values $g_\star(x)$ and $S(x)$ are only approached as $T\to0^+$ and $T\to\infty$ and are not attained at any finite $T$ except in the degenerate cases of the interval Lemma; but
\item attaining that value does not reproduce $p$: $q_T=p$ holds only if $p(\cdot\mid x)\propto q(\cdot\mid x)^{\beta}$ for some $\beta>0$.
\end{enumerate}
\end{theorem}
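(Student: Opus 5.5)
The argument rests on three ingredients: support preservation, which is immediate from the form of $q_T$; the bound $N_2^T(x)\le S(x)$ together with its supremum, which follows from Cauchy--Schwarz and the preceding interval Lemma; and a short uniqueness argument for part (iii).

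\textbf{Support preservation.} Since $q_T(y\mid x)=q(y\mid x)^\beta/Z_\beta(x)$ with $\beta>0$ and $Z_\beta(x)\in(0,\infty)$, we have $q_T(y\mid x)>0$ exactly when $q(y\mid x)>0$. Hence $\mathrm{supp}\,q_T=\mathrm{supp}\,q$, with cardinality $S(x)$.

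\textbf{The bound and its supremum.} For any distribution $r$ supported on a set of size $S$, Cauchy--Schwarz gives $1=\big(\sum_{y\in\mathrm{supp}}r(y)\big)^2\le S\sum_y r(y)^2$. So $C(r)\ge 1/S$ and $N_2(r)\le S$, with equality iff $r$ is uniform on its support. Applying this to $r=q_T$ yields $N_2^T(x)\le S(x)$.

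For the supremum, the interval Lemma gives monotonicity in $T$ and the limit $S(x)$ as $T\to\infty$, so $\sup_T N_2^T(x)=S(x)$. If $q$ is not uniform on its support, then $q_T$ is not uniform either: $q^\beta$ is constant on the support only if $q$ is. The Cauchy--Schwarz equality case then shows the supremum is not attained at any finite $T$.

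\textbf{Parts (i) and (ii).} Part (i) follows at once: if $N_2(p)>S(x)\ge N_2^T(x)$ for every $T$, no temperature can match. Part (ii) is a direct restatement of the interval Lemma. Continuity of $T\mapsto N_2^T$ holds because $Z_\beta$ is a finite sum of continuous functions of $\beta$. Combined with the limits $g_\star$ and $S$, the intermediate value theorem gives a finite $T$ with $N_2^T(x)=N_2(p)$ for any target value in the open interval.

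\textbf{Part (iii).} Suppose $q_T=p$. Then $p(y\mid x)=q(y\mid x)^\beta/Z_\beta$ by definition, so $p\propto q^\beta$. Thus $q_T=p$ forces $p$ to lie in the one-parameter power family through $q$. Any $p$ outside this family, for instance one whose support differs from $\mathrm{supp}\,q$ or whose ranking of sequences differs from $q$'s, cannot be reproduced at any temperature, even when its $N_2$ value is attained.

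The only mildly delicate step is the non-attainment claims. I would handle them entirely through the Cauchy--Schwarz equality case, and through the analogous equality case at the lower end ($q_T$ concentrated on the maximizers only when $q$ already is), rather than through strict monotonicity. Strict monotonicity is available from $\psi''>0$ in the earlier proposition, but the equality-case route avoids relying on it.
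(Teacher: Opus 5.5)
Your proposal is correct and follows the paper's proof essentially step for step: support preservation from the form of $q_T$, the Cauchy--Schwarz bound $N_2(r)\le S$ with equality iff $r$ is uniform on its support, the interval Lemma plus the intermediate value theorem for (ii), and the power-family identity $p\propto q^{\beta}$ for (iii). Your equality-case argument for non-attainment (that $q^{\beta}$ is constant on the support only if $q$ is) spells out a step the paper only asserts; at the lower end it implicitly uses $N_2(r)\ge g_\star(r)$, with equality iff $r$ puts all its mass on its maximizers, which is what your parenthetical relies on.
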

\begin{proof}
Temperature scaling preserves support: $q_T(y)>0\iff q(y)>0$ for every finite $T>0$, so $q_T$ is supported on a set of size $S(x)$. For any distribution $r$ over $S$ outcomes, Cauchy--Schwarz gives $1=\big(\sum_a r(a)\big)^2\le S\sum_a r(a)^2$, i.e.\ $N_2(r)=1/\sum_a r(a)^2\le S$, with equality iff $r$ is uniform on its support. Applying this to $r=q_T$ yields $N_2^{T}(x)\le S(x)$; the bound is approached as $q_T\to\mathrm{Unif}(\mathrm{supp}\,q)$ when $T\to\infty$ and is not attained at finite $T$ unless $q$ is already uniform on its support. This gives the displayed bounds and case~(i): if $N_2(p)>S(x)$, no $q_T$ can reach it. For case~(ii), $N_2^{T}(x)$ is continuous in $T$ and takes every value in the open interval $(g_\star(x),S(x))$ (Lemma), so whenever $N_2(p)$ lies in this open interval the intermediate value theorem supplies a finite $T$ with $N_2^{T}(x)=N_2(p)$; the endpoints are limits, not attained at finite $T$ outside the degenerate cases. For case~(iii), $\{q_T:T>0\}$ is the one-parameter family $\log q_T(y)=\beta\log q(y)-\log Z_\beta$; matching $p$ exactly requires $\log p(y)=\beta\log q(y)-\log Z_\beta$ for all $y$, i.e.\ $p\propto q^{\beta}$.
\end{proof}

\noindent {\bf Interpretation.}
Temperature preserves support and moves the global sequence distribution from the uniform distribution over its
maximizers as $T\to0^+$ toward the uniform distribution over its support as $T\to\infty$. It can therefore match a
target collision value while missing the target distribution, and the high-temperature limit recovers the target
only when the target itself is uniform on the model's support.

\begin{lemma}[Temperature scaling is an order-preserving one-parameter family]
\label{lem:orderpreserving}
Fix a prompt $x$ and write $q_T(\cdot\mid x)\propto q(\cdot\mid x)^{1/T}$.
\begin{enumerate}[label=(\roman*)]
\item Log-odds identity. For every $T>0$ and any outcomes $a,b$ with $q(b\mid x)>0$,
\begin{equation}
\log\frac{q_T(a\mid x)}{q_T(b\mid x)}=\frac{1}{T}\,\log\frac{q(a\mid x)}{q(b\mid x)} .
\end{equation}
\item Order preservation. Hence temperature scaling is an order-preserving reweighting: the rank ordering of
$\{q_T(y\mid x)\}_y$ is the same for all $T>0$. The reachable set is a one-parameter curve from the uniform
distribution over $\argmax_y q(y\mid x)$ as $T\to0^+$ to the uniform distribution over
$\operatorname{supp}q(\cdot\mid x)$ as $T\to\infty$.
\item Unreachability. Consequently $q_T=p(\cdot\mid x)$ for some $T>0$ only if $p(\cdot\mid x)\propto q(\cdot\mid x)^{\beta}$, i.e.\ $\log p$ is an affine function of $\log q$. In particular, if the rank ordering of $p(\cdot\mid x)$ differs from that of $q(\cdot\mid x)$, no temperature matches $p$.
\end{enumerate}
\end{lemma}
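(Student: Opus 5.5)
Part (i) is a direct computation from the definition of temperature scaling. I write $q_T(y\mid x)=q(y\mid x)^{\beta}/Z_\beta(x)$ with $\beta=1/T$. For any $a,b$ with $q(b\mid x)>0$, the normalizer $Z_\beta(x)$ cancels in the ratio $q_T(a\mid x)/q_T(b\mid x)$, leaving $\bigl(q(a\mid x)/q(b\mid x)\bigr)^{\beta}$. Taking logarithms gives the identity. When $q(a\mid x)=0$, both sides equal $-\infty$, consistent with support preservation from Theorem~\ref{thm:tempsupport}.

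For (ii), I would note that $u\mapsto u^{\beta}$ is strictly increasing on $[0,1]$ for every $\beta>0$, and dividing by the positive constant $Z_\beta(x)$ preserves strict order. Hence $q(a\mid x)>q(b\mid x)$ if and only if $q_T(a\mid x)>q_T(b\mid x)$, and ties are likewise preserved, so the rank ordering does not depend on $T$.

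The endpoint description is essentially the computation already done for the temperature-reachable diversity interval lemma, but now applied at the level of distributions rather than $N_2$.
\begin{itemize}
\item As $\beta\to\infty$, divide numerator and denominator by $q_{\max}^{\beta}$. Then $(q(y\mid x)/q_{\max})^{\beta}$ tends to $\mathbf 1\{y\in\argmax\}$, so $q_T$ converges pointwise to uniform on $\argmax_y q(y\mid x)$.
\item As $\beta\to0^+$, $q(y\mid x)^{\beta}\to\mathbf 1\{q(y\mid x)>0\}$, so $q_T$ converges to uniform on the support.
\end{itemize}
Using the finite-support assumption of that section, these pointwise limits pass through the finite sums in $Z_\beta$.

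For (iii), the necessity direction is the same as Theorem~\ref{thm:tempsupport}(iii). If $q_T=p(\cdot\mid x)$, then $\log p(y\mid x)=\beta\log q(y\mid x)-\log Z_\beta(x)$ on the support, which is affine in $\log q$. The rank corollary then follows from (ii). If $q_T=p$, the ordering of $p$ equals the ordering of $q_T$, which equals the ordering of $q$. So a differing order rules out every $T$.

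The only delicate step is the bookkeeping off the support. I must check that $p$ vanishes wherever $q$ does, which holds because $q_T$ has the same support as $q$. I should also state the affine relation only on $\operatorname{supp}q$, or interpret $\log 0=-\infty$ consistently. I expect no real obstacle beyond this.
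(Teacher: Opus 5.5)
Your proposal is correct and follows the paper's proof essentially step for step: cancel the normalizer for (i), use strict monotonicity of $u\mapsto u^{1/T}$ for (ii), and read off $p\propto q^{\beta}$ and the fixed ordering for (iii). The one small difference is in the endpoints of (ii): you compute the $T\to0^+$ and $T\to\infty$ limits of $q_T$ itself, whereas the paper defers to the interval lemma, which strictly speaking only gives the limits of $N_2^{T}(x)$; your direct computation, together with your note that finite support is needed for the $\beta\to0^+$ limit, closes that gap.
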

\begin{proof}
For (i), $q_T(y)=q(y)^{1/T}/Z_{1/T}$, so the normalizer cancels in the ratio and $q_T(a)/q_T(b)=(q(a)/q(b))^{1/T}$; take logarithms. For (ii), $u\mapsto u^{1/T}$ is strictly increasing on $(0,\infty)$, so $q_T(a)\ge q_T(b)$ exactly when $q(a)\ge q(b)$; this common ordering holds for all $T$, with the two endpoints given by the interval Lemma above. For (iii), a fixed ordering rules out any $q_T$ with a different one, while $q_T=p$ forces $p\propto q^{1/T}$.
\end{proof}

Temperature rescales every log-odds ratio by the same factor. It can flatten or sharpen relative-probability errors,
but it cannot reorder outcomes or independently correct them. Figure~\ref{fig:tempbars} gives a six-outcome example.

\subsection{Effective support}
The finite-support Theorem~\ref{thm:tempsupport} requires a finite candidate space to ensure $S(x)<\infty$. While standard softmax assigns strictly positive probability to all tokens and, subject to a length constraint, to most sequences, the Theorem's condition of finite support may be violated without a maximum length, rendering the claim about temperature's inability to recover missing support inapplicable.  A more broadly applicable statement is probabilistic: rare, valid continuations are not entirely absent from $q_T$, but are effectively absent when sampling from a feasible number of samples.

\begin{proposition}[Effective absence of valid missing mass under sampling]
\label{prop:effsupport}
Fix a prompt $x$ and consider a sequence of fitted models $q_n$ indexed by dataset size $n$, with temperature-scaled tail mass $\varepsilon_{n,T}(x)=q_{n,T}(U(x)\mid x)$ on a set $U(x)$ of valid but unobserved or severely underweighted continuations. Drawing $M_n$ independent generations from $q_{n,T}(\cdot\mid x)$,
\begin{equation}
\Pr\big(\text{at least one of the }M_n\text{ samples lands in }U(x)\big)=1-\big(1-\varepsilon_{n,T}(x)\big)^{M_n}\le M_n\,\varepsilon_{n,T}(x).
\end{equation}
Hence if $\varepsilon_{n,T}(x)\to0$ and $M_n\varepsilon_{n,T}(x)\to0$ along a sequence of fitted models,
these continuations are effectively absent from the generated samples. This is an assumed asymptotic regime over
models, not a consequence of changing $T$ at fixed $q_n$.
\end{proposition}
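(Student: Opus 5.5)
The plan is a direct union-bound computation followed by a short limiting argument; the only care needed is in the hypotheses.

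\emph{Independence.} Fix the prompt $x$, the dataset size $n$ and the temperature $T$. The $M_n$ generations $Y_1,\dots,Y_{M_n}$ are i.i.d.\ draws from $q_{n,T}(\cdot\mid x)$. Each one lands in $U(x)$ independently, with probability exactly $\varepsilon:=\varepsilon_{n,T}(x)=q_{n,T}(U(x)\mid x)$. This is the definition of the tail mass, extended by countable additivity when $U(x)\subseteq V^*$ is infinite.

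\emph{Complement.} The event that at least one sample lands in $U(x)$ is the complement of the event that all $M_n$ samples avoid $U(x)$. By independence the latter has probability $(1-\varepsilon)^{M_n}$, which gives the displayed equality.

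\emph{Union bound.} For the inequality, write $\Pr\big(\bigcup_{i}\{Y_i\in U(x)\}\big)\le\sum_i\Pr(Y_i\in U(x))=M_n\varepsilon$. Equivalently, Bernoulli's inequality $(1-\varepsilon)^{M}\ge 1-M\varepsilon$ for $\varepsilon\in[0,1]$ and integer $M\ge0$ gives the same bound; this is a one-line induction on $M$.

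\emph{Limit.} If $M_n\varepsilon_{n,T}(x)\to0$ along the sequence of fitted models, the probability of observing any continuation from $U(x)$ is squeezed between $0$ and $M_n\varepsilon_{n,T}(x)$, so it tends to $0$. The separate hypothesis $\varepsilon_{n,T}(x)\to0$ is implied whenever $M_n\ge1$, but I would keep it as stated.

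The only real obstacle is interpretive rather than technical. I would stress that $T$ is held fixed along the sequence and that the vanishing of the tail mass is an assumption on the fitted models $q_n$, not something derived from temperature scaling at a fixed $q_n$. The contrast with Theorem~\ref{thm:tempsupport} is that a fixed $q$ with $\varepsilon>0$ yields $(1-\varepsilon)^{M}\to0$ as $M\to\infty$, so the tail is eventually sampled. Effective absence therefore needs the joint regime $M_n\varepsilon_{n,T}\to0$, which is exactly the stated hypothesis.
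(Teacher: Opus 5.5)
Your proposal is correct and matches the paper's proof: the paper likewise computes the probability that no i.i.d. draw lands in $U(x)$ as $(1-\varepsilon_{n,T}(x))^{M_n}$ and then applies Bernoulli's inequality, which is interchangeable with your union bound, before concluding from $M_n\varepsilon_{n,T}(x)\to0$. Your additional remarks are also correct: $M_n\varepsilon_{n,T}(x)\to0$ implies $\varepsilon_{n,T}(x)\to0$ once $M_n\ge1$, and for a fixed model the tail is eventually sampled.
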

\begin{proof}
The $M_n$ generations are i.i.d.\ draws from $q_{n,T}(\cdot\mid x)$, so the probability that none lands in $U(x)$ is $\big(1-\varepsilon_{n,T}(x)\big)^{M_n}$, giving the stated equality. Bernoulli's inequality $(1-\varepsilon)^{M_n}\ge1-M_n\varepsilon$ for $\varepsilon\in[0,1]$ yields $1-(1-\varepsilon_{n,T}(x))^{M_n}\le M_n\,\varepsilon_{n,T}(x)$. If $M_n\,\varepsilon_{n,T}(x)\to0$ this probability vanishes, so with high probability no sample realizes any continuation in $U(x)$.
\end{proof}

\noindent {\bf No validity-aware guarantee.}
Split an underweighted tail into valid and invalid sets, $U_{\mathrm{valid}}(x)$ and
$U_{\mathrm{invalid}}(x)$. Temperature applies the same map $u\mapsto u^{1/T}$ to both:
\begin{equation}
\label{eq:valinvalratio}
\frac{q_T(U_{\mathrm{valid}}(x)\mid x)}{q_T(U_{\mathrm{invalid}}(x)\mid x)}=\frac{\sum_{y\in U_{\mathrm{valid}}(x)}q(y\mid x)^{1/T}}{\sum_{y\in U_{\mathrm{invalid}}(x)}q(y\mid x)^{1/T}}.
\end{equation}
The ratio can vary with $T$, but the transformation contains no validity signal. Temperature can incidentally increase
valid mass; it cannot guarantee selective recovery. Proposition~\ref{prop:effsupport} implies effective absence only
when its stated condition $M_n\varepsilon_{n,T}(x)\to0$ holds.

\begin{example}[A two-peaked target temperature cannot reach]\label{ex:twopeak}
Take a fixed six-outcome context in which the model $q=(0.50,0.25,0.12,0.07,0.04,0.02)$ is monotone decreasing while the target $p=(0.30,0.05,0.05,0.05,0.25,0.30)$ is two-peaked: here $N_2(p)=4\le S=6$, so the diversity value is attainable (Theorem, case (ii)), yet the shape of $p$ is not, because its ordering differs from that of $q$ (case (iii)).
\end{example}

\begin{figure}[!ht]
\centering
\begin{tikzpicture}
\begin{groupplot}[
  group style={group size=4 by 1, horizontal sep=0.25cm,
    ylabels at=edge left, yticklabels at=edge left},
  width=0.245\textwidth, height=4.8cm,
  ybar, /pgf/bar width=4pt, enlarge x limits=0.16,
  ymin=0, ymax=0.8,
  xtick={1,2,3,4,5,6}, xlabel={outcome},
  xticklabel style={font=\footnotesize}, title style={font=\small},
]
\nextgroupplot[title={$T=0.5$}, ylabel={probability},
  legend style={at={(0.97,0.97)}, anchor=north east, font=\footnotesize,
    draw=none, fill=none}]
\addplot[draw=blue!75, fill=blue!40] coordinates {(1,0.7490)(2,0.1872)(3,0.0431)(4,0.0147)(5,0.0048)(6,0.0012)};
\addplot[red, thick, mark=*, sharp plot, /pgf/bar shift=0pt] coordinates {(1,0.30)(2,0.05)(3,0.05)(4,0.05)(5,0.25)(6,0.30)};
\addlegendentry{$q_T$}
\addlegendentry{$p$}
\nextgroupplot[title={$T=1$}]
\addplot[draw=blue!75, fill=blue!40] coordinates {(1,0.5000)(2,0.2500)(3,0.1200)(4,0.0700)(5,0.0400)(6,0.0200)};
\addplot[red, thick, mark=*, sharp plot, /pgf/bar shift=0pt] coordinates {(1,0.30)(2,0.05)(3,0.05)(4,0.05)(5,0.25)(6,0.30)};
\nextgroupplot[title={$T=2$}]
\addplot[draw=blue!75, fill=blue!40] coordinates {(1,0.3274)(2,0.2315)(3,0.1604)(4,0.1225)(5,0.0926)(6,0.0655)};
\addplot[red, thick, mark=*, sharp plot, /pgf/bar shift=0pt] coordinates {(1,0.30)(2,0.05)(3,0.05)(4,0.05)(5,0.25)(6,0.30)};
\nextgroupplot[title={$T=\infty$}]
\addplot[draw=blue!75, fill=blue!40] coordinates {(1,0.1667)(2,0.1667)(3,0.1667)(4,0.1667)(5,0.1667)(6,0.1667)};
\addplot[red, thick, mark=*, sharp plot, /pgf/bar shift=0pt] coordinates {(1,0.30)(2,0.05)(3,0.05)(4,0.05)(5,0.25)(6,0.30)};
\end{groupplot}
\end{tikzpicture}
\caption{\textbf{Temperature preserves outcome order.} For
$q=(0.50,0.25,0.12,0.07,0.04,0.02)$, increasing $T$ flattens the distribution but cannot reproduce the off-order
target $p=(0.30,0.05,0.05,0.05,0.25,0.30)$, even though $N_2(p)=4<S=6$.}
\label{fig:tempbars}
\end{figure}

\noindent {\bf Two-sided miscalibration.}
Temperature can leave diversity below the target or flatten beyond it. Neither direction alone establishes output
validity: low-probability continuations can be valid or invalid. Even when a temperature matches $N_2(p)$, it
matches $p$ only in the power-family case of Theorem~\ref{thm:tempsupport}.

The rate of temperature-driven diversity change is determined by the spread of the model's log-probabilities.  Defining $\psi(\beta) = \log Z_\beta$, we have $\psi'(\beta) = \E_{q_\beta}[\log q]$ and $\psi''(\beta) = \Var_{q_\beta}(\log q) \ge 0$, where $q_\beta$ is proportional to $q^\beta$. Given that $\log C_T = \psi(2\beta) - 2\psi(\beta)$, the derivative with respect to $\log T$ is $-\beta$ times the derivative with respect to $\beta$.
\begin{equation}
\frac{d\,\log N_2^{T}(x)}{d\,\log T}
= -\frac{d\,\log C_T}{d\,\log T}
= 2\beta\!\int_{\beta}^{2\beta}\Var_{q_s}(\log q)\,ds \;\ge\;0 .
\end{equation}
The derivative is governed by the log-probability variance along the power family. For nonuniform finite-support
$q$, finite temperatures trace the open interval $(g_\star(x),S(x))$ and approach its endpoints only in the limits.

\begin{figure}[H]
\centering
\begin{tikzpicture}
\begin{axis}[width=0.72\textwidth,height=5.6cm,
  xlabel={decoding temperature $T$ (log scale)},
  ylabel={effective number of choices $N_2^{T}$},
  xmode=log,xmin=0.1,xmax=30,ymin=0.8,ymax=4.2,
  samples=200,grid=both,legend pos=south east,
  every axis plot/.append style={very thick}]
\addplot[blue,domain=0.1:30]
  { (0.7^(1/x)+0.2^(1/x)+0.07^(1/x)+0.03^(1/x))^2 / (0.7^(2/x)+0.2^(2/x)+0.07^(2/x)+0.03^(2/x)) };
\addlegendentry{$N_2^{T}$ for $q=(0.7,0.2,0.07,0.03)$}
\addplot[black,densely dashed,domain=0.1:30]{1};
\addplot[black,densely dashed,domain=0.1:30]{4};
\addplot[red,only marks,mark=*] coordinates {(1,1.87)};
\node[anchor=west,font=\footnotesize] at (axis cs:1.05,1.87){$T=1:\ N_2\approx1.87$};
\node[anchor=south west,font=\footnotesize] at (axis cs:0.11,4){$S=4$};
\node[anchor=north west,font=\footnotesize] at (axis cs:0.11,1){$g_\star=1$};
\end{axis}
\end{tikzpicture}
\caption{\textbf{Global temperature traces a fixed diversity interval.} For
$q=(0.7,0.2,0.07,0.03)$, $N_2^T$ increases from the mode count $1$ toward the support size $4$; finite
temperatures attain neither endpoint.}
\label{fig:tempinterval}
\end{figure}

LLM decoding employs temperature locally, applying $q_T(a\mid h) = q(a\mid h)^{1/T} / \sum_b q(b\mid h)^{1/T}$, which influences the future contexts reached, as well as the probabilities of EOS, token length, format, and refusal, and the mixture of rare-valid and rare-invalid tokens. The monotonicity result for global sequence temperature therefore need not hold for token-by-token decoding.

\noindent {\bf Simple example.} Suppose the root probabilities are $0.4$ for EOS and $0.6$ for CONTINUE, followed
by $100$ uniform endings after CONTINUE. Sequence collision is
$0.4^2+0.6^2(0.01)=0.1636$. Raising local temperature moves the root toward $(0.5,0.5)$ while leaving the
uniform branch unchanged, so collision approaches $0.5^2+0.5^2(0.01)=0.2525$. Thus local temperature can reduce
sequence diversity even though global temperature of a fixed sequence distribution cannot.

\begin{figure}[H]
\centering
\begin{tikzpicture}
\begin{axis}[width=0.72\textwidth,height=5.6cm,
  xlabel={decoding temperature $T$}, ylabel={sequence collision $C_T=1/N_2^{T}$ (lower = more diverse)},
  xmin=0.2,xmax=3,ymin=0,ymax=0.3,samples=200,grid=both,
  legend pos=south east,every axis plot/.append style={very thick}]
\addplot[blue,domain=0.2:3]
  { (0.4^(1/x))^2/((0.4^(1/x)+0.6^(1/x))^2) + 0.01*(0.6^(1/x))^2/((0.4^(1/x)+0.6^(1/x))^2) };
\addlegendentry{$C_T$ for the EOS/continue example}
\addplot[black,densely dashed] coordinates {(1,0)(1,0.1636)};
\node[anchor=west,font=\footnotesize] at (axis cs:1.02,0.08){$T=1$};
\end{axis}
\end{tikzpicture}
\caption{\textbf{Local temperature can increase sequence collision.} In the two-stage EOS/continue example,
raising $T$ moves mass toward the short deterministic branch and reduces sequence diversity.}
\label{fig:temperature}
\end{figure}

\end{document}